\documentclass{article}
\PassOptionsToPackage{square, sort, numbers}{natbib}

\usepackage[final]{neurips_2022}




\usepackage[utf8]{inputenc} 
\usepackage[T1]{fontenc}    
\usepackage{hyperref}       
\usepackage{url}            
\usepackage{booktabs}       
\usepackage{amsfonts}       
\usepackage{nicefrac}       
\usepackage{microtype}      
\usepackage{xcolor}         

\usepackage{microtype}
\usepackage{graphicx}
\usepackage{subcaption}
\usepackage{booktabs} 
\usepackage{multirow}
\usepackage{amsmath}
\usepackage{amssymb}
\usepackage{wrapfig}

\usepackage{hyperref}

\usepackage{cleveref}
\usepackage{mathtools}
\usepackage{xparse}
\usepackage{xcolor}
\usepackage{amsthm}
\usepackage{amsmath}

\newtheorem{definition}{Definition}
\newtheorem{theorem}{Theorem}
\newtheorem{lemma}{Lemma}
\theoremstyle{definition}

\DeclarePairedDelimiter\floor{\lfloor}{\rfloor}

\newcommand{\xv}{\mathbf{x}}
\newcommand{\zv}{\mathbf{z}}

\newcommand{\sbpm}[1]{\scalebox{0.7}{$\pm$#1}}
\NewDocumentCommand{\ming}{ mO{} }{\textcolor{red}{\textsuperscript{\textit{Ming}}\textsf{\textbf{\small[#1]}}}}



\title{Few-Shot Non-Parametric Learning with Deep Latent Variable Model}

%

\author{%
  Zhiying Jiang$^{1,2}$ \quad Yiqin Dai$^{2}$ \quad Ji Xin$^{1}$ \quad Ming Li$^{1}$ \quad Jimmy Lin$^{1}$ \\
  $^1$ University of Waterloo \quad $^2$ AFAIK. \\
  \texttt{zhiying.jiang@uwaterloo.ca} \quad \texttt{phinodadai@gmail.com} \\
  \texttt{\{ji.xin, mli, jimmylin\}@uwaterloo.ca} \\
}


\begin{document}

\maketitle

\begin{abstract}

Most real-world problems that machine learning algorithms are expected to solve face the situation with 1) unknown data distribution; 2) little domain-specific knowledge; and 3) datasets with limited annotation. 
We propose Non-Parametric learning by Compression with Latent Variables (NPC-LV), a learning framework for any dataset with abundant unlabeled data but very few labeled ones.
By only training a generative model in an unsupervised way, the framework utilizes the data distribution to build a compressor. Using a compressor-based distance metric derived from Kolmogorov complexity, together with few labeled data, NPC-LV classifies without further training.
We show that NPC-LV outperforms supervised methods on all three datasets on image classification in low data regime and even outperform semi-supervised learning methods on CIFAR-10. We demonstrate how and when negative evidence lowerbound (nELBO) can be used as an approximate compressed length for classification. By revealing the correlation between compression rate and classification accuracy, we illustrate that under NPC-LV, the improvement of generative models can enhance downstream classification accuracy.

\end{abstract}

\section{Introduction}

The progress of deep neural networks drives great success of supervised learning with huge labeled datasets~\cite{simonyan2014very, sutskever2014sequence, gilmer2017neural}. 
However, large labeled datasets are luxurious in many applications and huge amounts of training parameters make the model easy to overfit and hard to generalize to other datasets. 
The urge to learn with small labeled dataset prompts Few-Shot Learning. However, most few-shot classification settings require either an auxiliary ``support set''~\cite{vinyals2016matching,edwards2016towards, snell2017prototypical, sung2018learning} that contains $c$ classes, each has $k$ samples ($c$-way $k$-shot); or prior knowledge about the dataset, where data augmentation can be performed within the same dataset~\cite{kwitt2016one, hariharan2017low, schwartz2018delta} or from other weakly-labeled/unlabeled/similar datasets~\cite{pfister2014domain, douze2018low, gao2018low}. 
This setting is not widely applicable to every dataset in practice, as it requires either an elaborate construction of additional ``support set'' or 
augmentation algorithms tailored to specific datasets. Pre-trained models, on the other hand, do not require adhoc ``support'' and are proved to be good at few-shot learning~\cite{brown2020language} and  even zero-shot learning~\cite{puri2019zero}. However, thousands of millions of training parameters make the model hard to be retrained but only fine-tuned. When the data distribution is substantially different from any datasets used in pre-training, the inductive bias from pre-training holds up fine-tuning, making the model less pliable~\cite{xie2020unsupervised}.

Goals from the above learning paradigms can be summarized as to design algorithms that can be applied to any dataset, and can learn with few labeled data, ideally with no training. ``No Free Lunch''~\cite{wolpert1997no} implies that it's impossible to have an algorithm that is both ``universal'' and ``best''. But how good can a ``universal'' algorithm be, especially in the low data regime, with no external data resources?
Specifically, we are interested in a new setting, \textit{Non-Supported Few-Shot Learning} (NS-FSL), defined as follows:\\
Given any target dataset $\mathbf{D}=(\xv_1, \xv_2, ..., \xv_n)$ belonging to $c$ classes. For each class, we have $k$ labeled samples $(1\leq k\leq 10)$. The remaining $n-ck$ unlabeled samples need to be classified into $c$ classes without the need of support sets, any other datasets or training parameters. 

This setting is similar to semi-supervised learning's but excludes labeled information in training.~\citet{ravi2016optimization} demonstrate that it's hard to optimize a neural network when labeled data is scarce. In order to make minimum assumption about labeled data, we aim at using parameter-free methods. 
The goal is to grasp the data-specific distribution $p(\xv)$, with minimal premises on conditional probability distribution $p(y|\xv)$. Deep generative models with explicit density estimation are perfect candidates for this goal. 
The problem then becomes: given trained generative models, how to take full advantage of the information obtained from them for classification? Using latent representation only utilizes $p(\zv|\xv)$, which just includes partial information. Even for those latent generative models that do not suffer from posterior collapse~\cite{bowman2016generating}, $p(\zv|\xv)$'s insufficiency for classification with non-parametric methods like $k$-nearest-neighbor is shown in both previous work~\cite{zhao2019infovae,davidson2018hyperspherical} and our experiments. 

Inspired by previous work that uses compressor-based distance metrics for non-parametric learning~\cite{chen1999compression, keogh2004towards, chen2004shared,cebrian2005common}, 
we propose Non-Parametric learning by Compression with Latent Variables (NPC-LV), a learning framework that consists of deep-generative-model-based compressors and compressor-based distance metrics. It leverages the power of deep generative models without exploiting any label information in the probabilistic modeling procedure. With no further training, this framework can be directly used for classification. 
By separating probabilistic modeling from downstream tasks that require labels, we grasp the unique underlying structures of the data in every dataset, and further utilize these structures in downstream tasks with no parameter tuning needed. 
We view this learning framework as a \textit{baseline} in this setting, for any dataset. We argue it is ``parameter-free'' as there is no parameter involved in classification stage for labeled data. Basically it means training a generative model as is and getting a classifier for free.

Our contributions are as follows: 
(1) We frame the existing methods into a general learning framework NPC, based on which, we derive NPC-LV, a flexible learning framework with replaceable modules. (2) We use NPC-LV 
as a baseline for a common learning scenario NS-FSL with neither support sets nor further training.
(3) Our method outperforms supervised methods by up to 11.8\% on MNIST, 18.0\% on FashionMNIST, 59\% on CIFAR-10 on image classification in low-data regime. It outperforms non-parametric methods using latent representation on all three datasets. It even outperforms semi-supervised learning methods on CIFAR-10.
(4) We show how negative evidence lowerbound (nELBO) can be used for classification under this framework.
(5) We find the correlation between bitrate and classification accuracy.
This finding suggests the improvement in the domain of deep-learning-based-compressor can further boost classification accuracy under this framework.

\section{Background}

\subsection{Information Theory --- Data Compression}
\label{bg:it}
In a compression scenario, suppose we have a sender \textit{Alice} and a receiver \textit{Bob}. \textit{Alice} wants to send a message that contains a sequence of symbols $\xv=(x_1, x_2, ..., x_n)$ to \textit{Bob}. The ultimate goal of the lossless compressor is to compress $\xv$ into the minimum amount of bits $\xv'$ that can later be decompressed back to $\xv$ by \textit{Bob}. To achieve the shortest compressed length, shorter codes are assigned to symbols with higher probability. According to Shannon's Source Coding Theorem~\cite{shannon1948mathematical}, this length of bits is no shorter than the entropy of the sequence, whose definition is $H(\xv)\triangleq \mathbb{E}[-\log p_{\text{data}}(\xv)]$, where $p_{\text{data}}(\xv)$ represents the probability distribution of each symbol in the sequence. 
Instead of coding symbols one by one, stream code algorithms like Asymmetric Numeral Systems (ANS)~\cite{duda2009asymmetric} convert $\xv$ to a sequence of bits $\xv'$ and reaches this optimal code length for the whole sequence with overhead of around 2 bits, given $p_{\text{data}}(\xv)$. 
However, the ``true'' probabilistic distribution $p_{\text{data}}(\xv)$ is unknown to us. We can only access samples and approximate it with $p_\theta(\xv)$. That is:
\begin{equation}
    \mathbb{E}[-\log p_\theta(\xv)]\geq H(\xv)\triangleq \mathbb{E}[-\log p_{\text{data}}(\xv)].
\end{equation}
Given an entropy coding scheme, the better $p_\theta(\xv)$ approximates $p_{\text{data}}(\xv)$, the closer we can get to the minimum code length. Modeling $p_\theta(\xv)$ is where deep generative model with density estimation can help.
Possible coding schemes and generative models for compressors are discussed in~\ref{sec:bbans}.

\subsection{Algorithmic Information Theory --- Kolmogorov Complexity and Information Distance}
\label{bg:ait}
While information theory is built on data distribution, algorithmic information theory considers ``single'' objects without notion of probability distribution. Kolmogorov complexity $K(x)$~\cite{kolmogorov1963tables} is used to describe the length of the shortest binary program that can produce $x$ on a universal computer, which is the ultimate lower bound of information measurement. Similarly, the Kolmogorov complexity of $x$ given $y$ is the length of the binary program that on input $y$ outputs $x$, denoted as $K(x|y)$. Based on Kolmogorov complexity, ~\citet{bennett1998information} derive \textit{information distance} $E(x,y)$:

\vspace{-.8em}
\begin{equation}
        E(x,y) = \max \{K(x|y), K(y|x)\} = K(xy)-\min\{K(x), K(y)\}.\footnote{Note that $K(xy)$ denotes the length of the shortest program computing $x$ and $y$ without telling which one is which (i.e., no seperator encoded between $x$ and $y$).}
         \label{id}
\end{equation}
The idea behind this measurement, on a high level, is that the similarity between two objects indicates the existence of a simple program that can convert one to another. The simpler the converting program is, the more similar the objects are. For example, the negative of an image is very similar to the original one as the transformation can be simply described as ``inverting the color of the image''.

\begin{theorem}
\label{theo:id}
\textit{
The function $E(x,y)$ is an admissible distance and a metric. It is minimal in the sense that for every admissible distance $D$, we have $E(x,y)\leq D(x,y)+O(1)$.
}
\end{theorem}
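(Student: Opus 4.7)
The plan is to break the statement into three sub-claims and verify each in turn: (i) $E(x,y)$ is symmetric, non-negative, and satisfies the triangle inequality (so it is a metric, at least up to additive logarithmic terms); (ii) $E$ is itself ``admissible'', i.e.\ upper semicomputable and satisfies the density/Kraft-type condition $\sum_{y}2^{-E(x,y)}\le 1$ that defines admissible distances in the Bennett--G\'acs--Li--Vit\'anyi--Zurek framework; and (iii) any other admissible distance $D$ dominates $E$ up to $O(1)$. I would first state precisely the definition of admissible distance being used (a nonnegative, upper semicomputable function satisfying the density constraint), because the whole proof hinges on that density condition.

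For the metric part, symmetry is immediate from the definition $E(x,y)=\max\{K(x\!\mid\! y),K(y\!\mid\! x)\}$, and non-negativity is trivial. For the triangle inequality I would use a standard concatenation-of-programs argument: given shortest programs witnessing $K(x\!\mid\! y)$ and $K(y\!\mid\! z)$, concatenate them (with a self-delimiting separator) to get a program producing $x$ from $z$, giving $K(x\!\mid\! z)\le K(x\!\mid\! y)+K(y\!\mid\! z)+O(1)$, and symmetrically for $K(z\!\mid\! x)$. Taking the max on each side gives $E(x,z)\le E(x,y)+E(y,z)+O(1)$. The equivalent expression $E(x,y)=K(xy)-\min\{K(x),K(y)\}$ that appears in the statement is itself a small lemma (due to Bennett et al.) which I would either cite or sketch using the symmetry of algorithmic information.

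The heart of the proof is the minimality claim, and this is where I expect the main obstacle to lie. The strategy is an enumeration argument: fix an admissible $D$ and let $d=\lceil D(x,y)\rceil$. Using upper semicomputability of $D$, from $x$ and $d$ one can enumerate the set $S_{x,d}=\{z:D(x,z)\le d\}$. The density condition $\sum_z 2^{-D(x,z)}\le 1$ forces $|S_{x,d}|\le 2^d$, so one can describe $y$ given $x$ and $d$ by its index in the enumeration, yielding $K(y\!\mid\! x)\le d+O(1)$; symmetry of the argument gives $K(x\!\mid\! y)\le d+O(1)$, hence $E(x,y)\le D(x,y)+O(1)$. The subtlety, and the main obstacle, is making the dependence on $d$ truly $O(1)$ rather than $O(\log d)$: one must use prefix (self-delimiting) Kolmogorov complexity and either absorb $d$ into the universal machine's input tape by a standard enumeration-over-$d$ trick, or appeal to the version of admissibility that already builds prefix-freeness into $D$. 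I would spell this out carefully since a careless encoding of $d$ is precisely where the $O(\log)$ slack would creep in.

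Finally, admissibility of $E$ itself follows from the two standard facts that $K(\cdot\!\mid\! x)$ is upper semicomputable and satisfies Kraft's inequality $\sum_y 2^{-K(y\mid x)}\le 1$; taking the max with $K(x\!\mid\! y)$ preserves upper semicomputability, and the density bound for $E$ follows (up to a constant) from the bound for $K(y\!\mid\! x)$ since $E(x,y)\ge K(y\!\mid\! x)$. Combining the three parts yields the theorem.
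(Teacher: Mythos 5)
Your decomposition into (metric) $+$ (admissibility of $E$) $+$ (minimality) matches the paper's, and the first two parts coincide in substance: the paper also reads symmetry and non-negativity straight off the $\max$ formula, proves the triangle inequality by the chain $K(z\mid x)\le K(y,z\mid x)\le K(y\mid x)+K(z\mid x,y)\le K(y\mid x)+K(z\mid y)$ (your concatenation-of-programs argument is the same bound), and gets admissibility of $E$ from Kraft's inequality via $E(x,y)\ge K(y\mid x)$, exactly as you do. Where you diverge is minimality: the paper does not prove this from scratch but invokes as a black box the Bennett--G\'acs--Li--Vit\'anyi--Zurek lemma that for any upper-semicomputable $f$ with $\sum_y 2^{-f(x,y)}\le 1$ one has $K(y\mid x)\le f(x,y)+O(1)$, and then applies it to $f=D$ in both arguments. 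You instead sketch a proof of that lemma by enumerating $S_{x,d}=\{z:D(x,z)\le d\}$ and indexing $y$ within it, and you are right to flag that a naive self-delimiting encoding of $d$ would cost $O(\log d)$ and spoil the constant. The clean way to close that gap, which is essentially what you call the ``enumeration-over-$d$ trick,'' is the Kraft--Chaitin/coding-theorem argument: do not fix and transmit a threshold $d$ at all; instead dovetail the upper-semicomputation of $D(x,\cdot)$ over all $z$, and whenever the running estimate for some $z$ drops below an integer level $k$, request a fresh prefix-free codeword of length $k+1$ for $z$. The density condition guarantees the total requested mass is at most $\sum_z 2\cdot 2^{-D(x,z)}\le 2$, so Kraft--Chaitin assigns all the codewords, and $y$ then has a self-delimiting description of length $\lceil D(x,y)\rceil+O(1)$ given $x$. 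That yields $K(y\mid x)\le D(x,y)+O(1)$ with a genuine additive constant, and the symmetric bound on $K(x\mid y)$ finishes the proof. So your route is a correct, more self-contained derivation that unpacks the lemma the paper cites; the paper's route is shorter but relies on the reader having the lemma in hand.
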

Intuitively, \textit{admissible distance} refers to distances that are meaningful (e.g., excluding metrics like $D(x,y)=0.3$ for any $x\neq y$) and computable (formal definition is in~\Cref{sec:norm}). Combining those definitions, we can see~\Cref{theo:id} means $E(x,y)$ is \textit{universal} in a way that it is optimal and can discover all effective similarities between two objects (proof is shown in~\Cref{proof:uni}). 

In order to compare the similarity, relative distance is preferred. \citet{li2001information} propose a normalized version of $E(x,y)$ called \textit{Normalized Information Distance} (NID). 
\begin{definition}[\textbf{NID}]
NID is a function: $ \Omega\times\Omega \rightarrow [0,1]$, where $\Omega$ is a non-empty set, defined as:
\begin{equation}
    \text{NID}(x,y) = \frac{\max \{K(x|y), K(y|x)\}}{\max \{K(x), K(y)\}}.
    \label{nid}
\end{equation}
\end{definition}
\Cref{nid} can be interpreted as follows: Given two sequences $x$, $y$, $K(y)\geq K(x)$:
\begin{equation}
    \text{NID}(x,y) = \frac{K(y)-I(x:y)}{K(y)} = 1 - \frac{I(x:y)}{K(y)},
    \label{nid2}
\end{equation}
where $I(x:y)=K(y)-K(y|x)$ means the \textit{mutual algorithmic information}. $\frac{I(x:y)}{K(y)}$ means the shared information (in bits) per bit of information contained in the most informative sequence, and ~\Cref{nid2} here is a specific case of~\Cref{nid}. Theoretically, NID is a desirable distance metric as it satisfies the metric (in)equalities (definition in~\Cref{sec:norm}) up to additive precision $O(1/K(\cdot))$ where $K(\cdot)$ is the maximum complexities of objects involved in (in)equalities (proof shown in~\cite{li2004similarity}). 


\section{Non-Parametric learning by Compression with Latent Variables}
Non-Parametric learning by Compression (NPC) consists of three modules --- a distance metric, a compressor and an aggregation method shown in~\Cref{fig:npc}. NPC-LV leverages NPC by including neural compressors based on deep generative models. We introduce the derivation of compressor-based distance metrics in~\Cref{sec:kc}; the generative-model-based compressor in~\Cref{sec:bbans}; an integration of this framework with generative models in~\Cref{sec:npcdgm}.

\subsection{Compressor-based Distance Metric}
\label{sec:kc}




Universal as NID is, it is uncomputable, because Kolmogorov complexity is uncomputable.
\citet{cilibrasi2005clustering} propose \textit{Normalized Compression Distance} (NCD), a quasi-universal distance metric based on real-world compressors. In this context, $K(x)$ can be viewed as the length of $x$ after being maximally compressed. Suppose we have $C(x)$ as the length of compressed $x$ produced by a real-world compressor, then NCD is defined as:
\begin{equation}
    \text{NCD}(x, y) = \frac{C(xy) - \min \{C(x), C(y)\}}{\max\{C(x), C(y)\}}.
    \label{ncd}
\end{equation}
The better the compressor is, the closer NCD approximates NID. With a \textit{normal} compressor (discussed in details in~\Cref{sec:norm}), NCD has values in [0,1] and satisfies the distance metric (in)equalities up to $O(\log n/n)$ where $n$ means the maximum binary length of a string involved~\cite{vitanyi2009normalized}. 
NCD is thus computable in that it not only uses compressed length to approximate $K(x)$ but also replaces conditional Kolmogorov complexity with $C(xy)$ that only needs a simple concatenation of $x,y$.
\citet{li2004similarity} simplify NCD by proposing another \textit{compression-based dissimilarity measurement} (CDM):
\begin{equation}
    \text{CDM}(x,y) = \frac{C(xy)}{C(x)+C(y)}.
\end{equation}

\citet{chen2004shared} use another variation ranging from 0 to 1: 
\begin{equation}
    \text{CLM}(x,y)=1-\frac{C(x)+C(y)-C(xy)}{C(xy)}.
\end{equation}
 NCD, CDM and CLM are different variations of Kolmogorov based distance metrics. We empirically evaluate their performance in~\Cref{sec:npc}.

\begin{figure*}[t]
    \centering
    \includegraphics[width=1\linewidth]{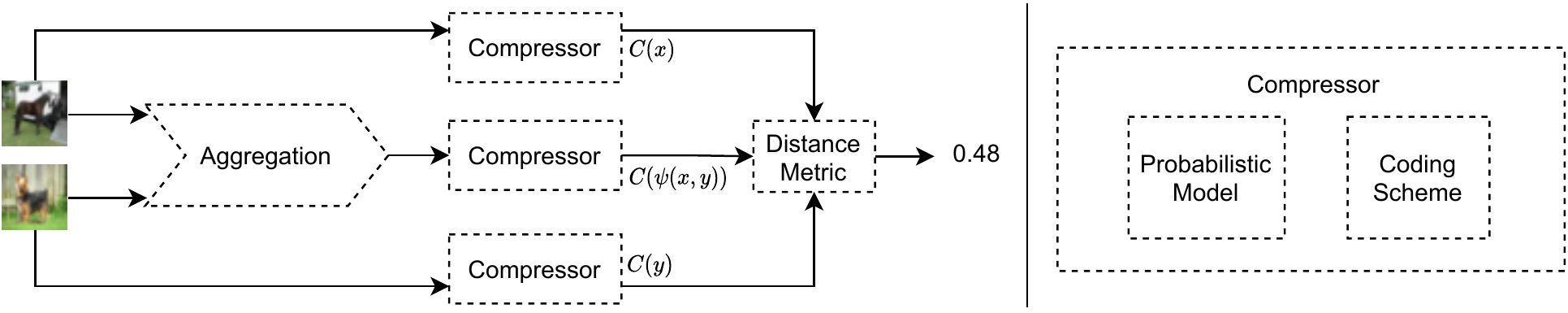}
    \caption{NPC framework with trainable deep probabilistic models. Replaceable modules are indicated with dashed lines.}
    \label{fig:npc}
\end{figure*}

\begin{wrapfigure}{r}{0.5\textwidth}
  \begin{center}
    \includegraphics[width=0.48\textwidth]{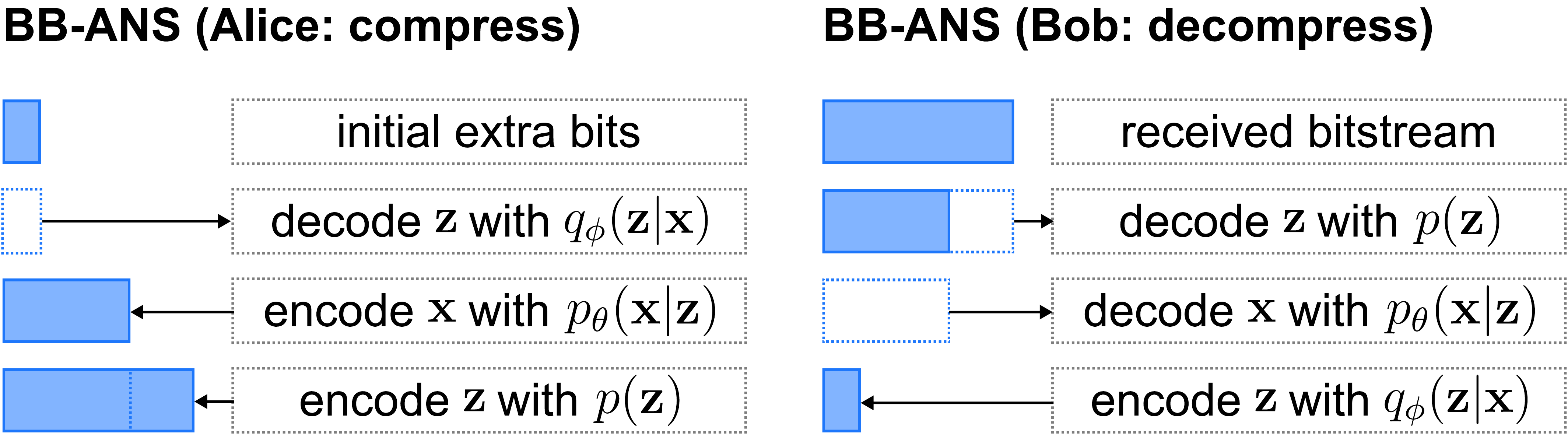}
  \end{center}
    \caption{BB-ANS compress \& decompress}
    \label{fig:bbans}
\end{wrapfigure}

\subsection{Trained Generative Models as Compressors}
\label{sec:bbans}
Previous works~\cite{chen2004shared, bennett1998information, cilibrasi2005clustering} demonstrate the success of compression-based distance metrics in sequential datasets like time series, DNA, and texts using non-neural compressors like gzip, bzip2, PPMZ.
Deep-generative-model–based compressors can take NPC to the next level by expanding to more data types using better compressors. We mainly focus on variational autoencoder (VAE) based compressors with brief introduction to other neural compressors.

\textbf{VAE Family: } The relation between VAE and ``bits-back'' has been revealed in multiple previous work~\cite{chen2016variational,honkela2004variational}. \citet{townsend2018practical} use latent variable models by connecting ANS to ``bits-back'' argument~\cite{frey1997efficient} (BB-ANS). 
In the setting of ``bits-back argument'', we assume \textit{Alice} has some extra bits of information to send to \textit{Bob} alongside with $\xv$. It's also assumed that both \textit{Alice} and \textit{Bob} have access to $p(\zv)$, $p_\theta(\xv|\zv)$ and $q_\phi(\zv|\xv)$ where $\zv$ is the latent variable; $p(\zv)$ is the prior distribution of $\zv$; $p_\theta(\xv|\zv)$ represents a generative network and $q_\phi(\zv|\xv)$ represents an inference network. 
As shown in the~\Cref{fig:bbans}, 
\textit{Alice} first decodes those extra information according to $q_\phi(\zv|\xv)$ to generate a sample $\zv$.\footnote{Note that ``encoding'', ``decoding'' here follows data compression's convention instead of variational autoencoder's.} $\zv$ is further used to encode $\xv$ with $p_\theta(\xv|\zv)$ and $\zv$ itself is encoded using $p(\zv)$. \textit{Bob} then reverses this procedure and recovers the extra bits by encoding with $q_\phi(\zv|\xv)$. For a single data point, the length of the final bitstream is:
\begin{equation}
    N = n_{\text{extra}} + \log q_\phi(\zv|\xv) - \log p_\theta(\xv|\zv) - \log p(\zv).
\end{equation}
We can see the expectation of $N-n_{\text{extra}}$ is equal to the negative evidence lower bound (nELBO):

\begin{equation}
     \mathbb{E}_{q_\phi(\zv|\xv)}[N-n_{\text{extra}}] = -\mathbb{E}_{q_\phi(\zv|\xv)}\log \frac{p_\theta(\xv,\zv)}{q_\phi(\zv|\xv)} = -\text{ELBO}
\label{eq:elbo}
\end{equation}



ELBO above is derived from ``bits-back argument'' in the context of compression. Now, from the perspective of latent variable model like VAE, the derivation often starts from the fact that $p_\theta(\xv) = \int p_\theta(\xv|\zv)p(\zv)$ is intractable. $q_\phi(\zv|\xv)$ is then introduced as an inference model to approximate $p(\zv|\xv)$ in order to work around the intractability problem, which brings up the marginal log likelihood:
\vspace{-0.5em}
\begin{equation}
\begin{split}
    & \log p_\theta(\xv)  = \mathbb{E}_{q_\phi(\zv|\xv)}\log \frac{p_\theta(\xv,\zv)}{q_\phi(\zv|\xv)} + \mathbb{E}_{q_\phi(\zv|\xv)}\log \frac{q_\phi(\zv|\xv)}{p(\zv|\xv)}, \\
    & \text{ ELBO}  = \log p_\theta(\xv)-D[q_\phi(\zv|\xv)\| p(\zv|\xv)].
\end{split}
\end{equation}


We only need to optimize the lower bound, as minimizing nELBO means maximizing $\log p_\theta(\xv)$ --- the likelihood of generating real data and minimizing KL divergence between $q_\phi(\zv|\xv)$ and $p(\zv|\xv)$ at the same time, which is the same objective function from what we derive using ``bits-back''.

This equivalence demonstrates that an optimized latent variable model can be used directly for compression as, from the data compression perspective, it minimizes the code length attainable by bits back coding using the model. With the help of ANS, we can encode symbols into bitstreams or decode bitstreams back to symbols with trained latent variable models. Details of ANS and discretizing continuous latent variables are shown in~\Cref{appx:ans,appx:disc}. 

\textbf{ARM: } Autoregressive models (ARM) model $p(\xv)$ as: 
    $p(\xv)=p(x_0)\prod_{i=1}^n p(x_i|\xv_{i-1})$.
The exact likelihood estimation makes it capable of lossless compression. But instead of using ANS, which is a stack-like coding scheme, queue-like ones (e.g., Arithmetic Coding (AC)~\cite{witten1987arithmetic}) should be used. Computational inefficiency is the main drawback of ARM such as RNN~\cite{rumelhart1985learning}, but causal convolutions~\cite{van2016conditional, van2016pixel} can alleviate the problem. 

\textbf{IDF: } Integer Discrete Flow (IDF)~\cite{hoogeboom2019integer} can also optimize towards the exact log-likelihood. Similar to other flow-based models, it utilizes invertible transformation $f$, but works on discrete distribution with additive coupling and rounding operation. For IDF, ANS can be used as the entropy coder.

Our experiments and discussion are mainly about VAE-based compressors as their architectures can be flexibly changed under the ``bits-back'' argument.

\subsection{NPC-LV}
\label{sec:npcdgm}



We’ve shown in~\Cref{sec:bbans} that we can plug in any trained latent variable model in exchange for a near optimal compressor under the framework of BB-ANS. To show that the coding scheme is replaceable, we introduce a coding scheme variation, Bit-Swap~\cite{kingma2019bit}, in the following experiments. The difference between BB-ANS and Bit-Swap is the encoding and decoding order when there are more than one latent variable. A detailed comparison is shown in~\Cref{appx:init}. 
The generative model we use for both two is a hierarchical latent generative model (details are in~\Cref{appx:hier}), also known as Deep Latent Gaussian Model (DLGM)~\cite{rezende2014stochastic}. 
\begin{algorithm}[h]
\caption{NPC-LV (use VAE and NCD as an example)}
\label{alg:npclv}
\begin{algorithmic}
\STATE {\bfseries Input:} $k$, $\mathbf{D}_\text{test}$,$\mathbf{D}_\text{train}=\{\mathbf{D}_\text{train}^U, \mathbf{D}_\text{train}^L\}$, $\mathbf{D}_\text{train}^L=\{X^L, y^L\}$\\

\STATE trained\_vae = \textcolor{teal}{trainVAE}($\mathbf{D}_\text{train}^U$) \\

\STATE vae\_compressor = \textcolor{teal}{ANS}(trained\_vae) \\
\FOR{$\xv_\text{test}$ in $\mathbf{D}_\text{test}$}
    \STATE C\_$\xv_\text{test}$ = \textcolor{teal}{len}(vae\_compressor($\xv_\text{test}$)) \\
    \STATE distances = []\\
    \FOR{$\xv_\text{train}$ in $X^L$}
        \STATE \textit{\textcolor{lightgray}{// Calculate NCD distance with $C(x)$, $C(y)$, and $C(\psi(x,y))$}} \\
        \STATE C\_$\xv_\text{train}$ = \textcolor{teal}{len}(vae\_compressor($\xv_\text{train}$)) \\
        \STATE C\_agg = \textcolor{teal}{len}(vae\_compressor( \textcolor{teal}{aggregate}($\xv_\text{test}$, $\xv_\text{train}$)) \\
        \STATE NCD = (C\_agg $-$ \textcolor{teal}{min}\{C\_$\xv_\text{test}$, C\_$\xv_\text{train}$\}) $/$ \textcolor{teal}{max}\{C\_$\xv_\text{test}$, C\_$\xv_\text{train}$\} \\
        \STATE distances = \textcolor{teal}{push}(NCD, distances)
    \ENDFOR \\
    \STATE k\_nearest\_indicies =  \textcolor{teal}{argsort}(distances)[:$k$] \\
    \STATE $y_\text{test}$ = \textcolor{teal}{majority}($\{y^L_i, i\in\text{k\_nearest\_indicies}\})$
\ENDFOR
\end{algorithmic}
\end{algorithm}

\textbf{Aggregation: }In addition to coding schemes and probabilistic models for compressors, aggregation methods can also be replaced.
Previous works~\cite{cilibrasi2005clustering} assume $xy$ in $C(xy)$ means the ``concatenation'' of two inputs. We expand this assumption to a more general case where ``aggregation'' can be other kind of aggregation function, represented as $C(\psi(x,y))$ in~\Cref{fig:npc}. We prove the legitimacy of this generalization as changing aggregations methods may make compressor-based distance metrics \textit{not admissible} (shown in~\Cref{sec:norm}).
More sophisticated strategies of aggregation are left to the future work. 
We also discuss other replaceable modules in details in~\Cref{sec:factor}.

We use BB-ANS with NCD as a concrete instance to demonstrate this framework on a classification task shown in~\Cref{alg:npclv}. $\mathbf{D}_\text{train}^U$ and $\mathbf{D}_\text{train}^L$ mean unlabeled and labeled training set; predefined functions are in teal. The algorithm can be simplified into four steps: 1) Train a VAE on the unlabeled training dataset; 2) Apply ANS with discretization on the trained VAE to get a compressor; 3) Calculate the distance matrix between pairs $(\xv_\text{test}, \xv_\text{train})$ with the compressor and NCD; 4) Run $k$-Nearest-Neighbor($k$NN) classifier with the distance matrix.

\textbf{nELBO as estimated compressed length: }Specifically for VAE-based compressors, nELBO is the estimated length of the compressed bitstream as~\Cref{eq:elbo} shows. Therefore, we can use it directly without actual compressing. This can further simplify our method as we don't need to discretize continuous distribution or apply entropy coder. 

The reason why the underlying data distribution can help the classification is based on \textit{manifold assumption}, which is a common assumption in SSL~\cite{van2020survey}. It states that the input space consists of multiple low-dimensional manifolds, and data points lying on the same manifold have same labels. This assumption helps alleviate the curse of dimensionality and may explain the effectiveness of using $k$NN with few labeled data. Due to the fact that our training process does not use labeled data, our method does not rely on other common assumptions in SSL like~\textit{smooth assumption} and \textit{low-density assumption}. 
The facts that NPC-LV makes very few assumptions about datasets and that compressors are data-type-agnostic make this framework extensible to other data types beyond images. For example, a combination of an autoregressive model (e.g., character recursive neural network) and arithmetic coding~\cite{goyal2019deepzip} can be used in our framework for sequential data.

\section{Related Work}



\textbf{Non-parametric learning with \textit{Information Distance}: }
\label{sec:rw}
\citet{bennett1998information} propose \textit{information distance} as a universal metric, 
based on which, several papers~\cite{grumbach1994new, krebsforschungzentrumtransformation, yianilos2002normalized} propose more fine-grained distance metrics. 
\citet{cilibrasi2005clustering,li2004similarity,chen2004shared} derive more practical distance metrics based on real-world compressors.
Empirical results~\cite{chen1999compression, keogh2004towards, chen2004shared,cebrian2005common} show that even without any training parameters, those compressor-based distance metrics can produce effective distance matrix for clustering and classification on time series datasets.
\citet{cilibrasi2005clustering} further push this direction to more types of datasets including images that are represented in ``\#'' (black pixel) and ``.'' (white pixel). We unify previous work in NPC framework, expand it to real image datasets and leverage it with neural compressors.

\textbf{Compression:} 
\citet{shannon1948mathematical} establishes source coding theorems, showing that entropy rate is the limit of code rate for lossless compression. Huffman Coding~\cite{huffman1952method} achieves the optimal symbol code whose length is upper-bounded by $H(\xv)+1$ per symbol. The 1 bit overhead is due to the fact that $-\log p(x)$ is not always an integer. Stream coding like AC~\cite{witten1987arithmetic} and ANS~\cite{duda2009asymmetric} further optimize by representing the whole message with numeral systems. 
Those entropy coders then can be combined with probabilistic modeling using neural network~\cite{schmidhuber1996sequential,mahoney2000fast,goyal2019deepzip} and used in our framework. 

\textbf{Semi-Supervised Learning with VAE: }
The evaluation paradigm in this paper is closest to Semi-Supervised Learning (SSL). \citet{kingma2014semi} design two frameworks for utilizing autoencoders in downstream classification tasks. The first (M1) is to train a tSVM~\cite{vapnik1999nature} with latent representation output by a trained VAE. The second (M2) is to train a VAE with label as another latent variable. M1 only requires a standard VAE but tSVM suffers from optimization difficulty~\cite{collobert2006large}, making it hard to be generally applicable for VAE. 
Later VAE-based methods~\cite{maaloe2016auxiliary,joy2020rethinking} are built on M2. These methods 
don't train a generative model in an unsupervised way like we do. 

\textbf{Few-Shot Learning: }Similar to our setting, FSL also targets at low labeled data regime. Large amounts of previous works~\cite{vinyals2016matching, snell2017prototypical,edwards2016towards,sung2018learning} on FSL are based on meta-learning, where the model is fed with an extra labeled support set, in addition to the target dataset. Another line of work~\cite{pfister2014domain, kwitt2016one, hariharan2017low, schwartz2018delta, douze2018low, gao2018low} utilize data augmentation. Although some of them do not require extra dataset~\cite{kwitt2016one, hariharan2017low, schwartz2018delta}, the augmentation algorithms can hardly be applied to every other dataset~\cite{wang2020generalizing}. Metric-based methods~\cite{koch2015siamese} utilize distance metrics for FSL. But instead of modeling the probability distribution of a dataset, they model the ``distance'' between any pair of data points with neural network, which still requires many labeled data during “pre-training”. Our work is similar to metric-based methods in that both have the essence of nearest-neighbor. The difference is that in the ``pre-training'' stage, our model is not trained to learn the distance but to reconstruct the image as all standard generative models do. More importantly, we use no labeled data in this stage.

\begin{table}[t!]

\centering
\resizebox{\textwidth}{!}{%
\begin{tabular}{c|c|c|c|c|c|c|c|c|c}

    Data & \multicolumn{3}{c|}{MNIST} & \multicolumn{3}{c|}{FashionMNIST} & \multicolumn{3}{c}{CIFAR-10}  \\
    \hline
    \#Shot & 5 & 10 & 50 & 5 & 10 & 50 & 5 & 10 & 50 \\
    \toprule
    \multicolumn{10}{c}{\textbf{Supervised Learning}} \\
    \hline
    SVM & 69.4\sbpm{2.2} & 77.1\sbpm{1.5} & 87.6\sbpm{0.4} & 67.1\sbpm{2.1} & 71.0\sbpm{1.6} & 78.4\sbpm{0.5} & 21.1\sbpm{1.9} & 23.6\sbpm{0.5} & 27.2\sbpm{1.2} \\
    \cline{2-10}
    \#Param & \multicolumn{3}{c|}{\scalebox{0.8}{35,280}} & \multicolumn{3}{c|}{\scalebox{0.8}{35,280}} & \multicolumn{3}{c}{\scalebox{0.8}{105,840}} \\
    \hline 
    CNN & 72.4\sbpm{3.5} & 83.7\sbpm{2.6} & 93.2\sbpm{2.8} & 67.4\sbpm{1.9} & 70.6\sbpm{2.5} & 80.5\sbpm{0.7} & 23.4\sbpm{2.9} & 28.3\sbpm{1.9} & 38.7\sbpm{1.9}  \\
    \cline{2-10}
    \#Param & \multicolumn{3}{c|}{\scalebox{0.8}{1,199,882}} & \multicolumn{3}{c|}{\scalebox{0.8}{1,199,882}} & \multicolumn{3}{c}{\scalebox{0.8}{1,626,442}} \\
    \hline
    VGG & 69.4\sbpm{5.7} & 83.9\sbpm{3.2} & 94.4\sbpm{0.6} & 62.8\sbpm{4.1} & 70.5\sbpm{4.5} & 81.5\sbpm{1.1} & 22.2\sbpm{1.6} & 29.7\sbpm{1.8} & 42.6\sbpm{1.2} \\
    \cline{2-10}
    \#Param & \multicolumn{3}{c|}{\scalebox{0.8}{28,148,362}} & \multicolumn{3}{c|}{\scalebox{0.8}{28,148,362}} & \multicolumn{3}{c}{\scalebox{0.8}{28,149,514}} \\
    \toprule
    \multicolumn{10}{c}{\textbf{Semi-Supervised Learning}} \\
    \hline
    VAT & \textit{97.0\sbpm{0.3}} & \textit{97.4\sbpm{0.1}} & \textit{98.4\sbpm{0.1}} & \textit{74.1\sbpm{0.8}} & \textit{78.4\sbpm{0.3}} & \textit{87.1\sbpm{0.2}} & 25.4\sbpm{2.0} & 27.8\sbpm{4.2} & 60.9\sbpm{6.1} \\
    \cline{2-10}
    \#Param & \multicolumn{3}{c|}{\scalebox{0.8}{1,469,354}} & \multicolumn{3}{c|}{\scalebox{0.8}{1,469,354}} & \multicolumn{3}{c}{\scalebox{0.8}{1,469,642}} \\
    \hline
    MT & 78.4\scalebox{0.7}{$\pm$2.0} & 82.8\sbpm{1.9} & 98.6\sbpm{0.2} & 58.1\sbpm{2.8} & 70.8\sbpm{0.8} & \textit{87.1\sbpm{0.1}} & 31.7\sbpm{1.5} & 35.9\sbpm{1.1} & \textit{64.3\sbpm{1.6}} \\
    \cline{2-10}
    \#Param & \multicolumn{3}{c|}{\scalebox{0.8}{1,469,354}} & \multicolumn{3}{c|}{\scalebox{0.8}{1,469,354}} & \multicolumn{3}{c}{\scalebox{0.8}{1,469,642}} \\
    \toprule
    \multicolumn{10}{c}{\textbf{Non-Parametric Learning}} \\
    \hline
    Single & 65.6\sbpm{1.2} & 76.8\sbpm{0.8} & 86.3\sbpm{0.3} & 40.2\sbpm{1.4} & 53.4\sbpm{1.1} & 70.0\sbpm{0.4} & 17.3\sbpm{0.9} & 19.2\sbpm{0.7} & 23.4\sbpm{0.3}  \\
    \cline{2-10}
    \#Param & \multicolumn{3}{c|}{\scalebox{0.8}{0}} & \multicolumn{3}{c|}{\scalebox{0.8}{0}} & \multicolumn{3}{c}{\scalebox{0.8}{0}} \\
        \hline
    Hier & 73.6\sbpm{3.1} & 82.3\sbpm{2.1} & 90.4\sbpm{1.4}  & 69.5\sbpm{3.5} & 72.5\sbpm{1.9} & 78.7\sbpm{1.3}  & 22.2\sbpm{1.6} & 24.2\sbpm{4.9} & 26.2\sbpm{2.9}  \\
    \cline{2-10}
    \#Param & \multicolumn{3}{c|}{\scalebox{0.8}{0}} & \multicolumn{3}{c|}{\scalebox{0.8}{0}} & \multicolumn{3}{c}{\scalebox{0.8}{0}} \\
    \toprule
    \multicolumn{10}{c}{\textbf{Non-Parametric learning by Compression with Latent Variables (NPC-LV)}} \\
    \hline
    \scalebox{0.9}{nELBO} & \textbf{75.2\sbpm{1.5}} & 81.4\sbpm{1.1} & 91.0\sbpm{1.0}  & \textbf{72.2\sbpm{2.2}} & \textbf{76.7\sbpm{1.5}} & \textbf{85.6\sbpm{1.1}} & \underline{\textbf{34.1\sbpm{1.8}}} & \textbf{34.6\sbpm{2.0}} & 35.6\sbpm{2.5}  \\
    \cline{2-10}
    \#Param & \multicolumn{3}{c|}{\scalebox{0.8}{0}} & \multicolumn{3}{c|}{\scalebox{0.8}{0}} & \multicolumn{3}{c}{\scalebox{0.8}{0}} \\
    \hline
    \scalebox{0.9}{Bit-Swap} & \textbf{75.7\sbpm{3.6}} & 83.3\sbpm{0.9} & 90.9\sbpm{0.2} & \textbf{73.5\sbpm{3.7}} & \textbf{76.0\sbpm{1.4}} & \textbf{82.6\sbpm{1.2}} & \underline{\textbf{32.2\sbpm{3.5}}} & \textbf{32.8\sbpm{1.9}} & 35.7\sbpm{1.1} \\
    \cline{2-10}
    \#Param & \multicolumn{3}{c|}{\scalebox{0.8}{0}} & \multicolumn{3}{c|}{\scalebox{0.8}{0}} & \multicolumn{3}{c}{\scalebox{0.8}{0}} \\
    \hline
    \scalebox{0.9}{BB-ANS} & \textbf{77.6\sbpm{0.4}} & \textbf{84.6\sbpm{2.1}} & 91.4\sbpm{0.6} & \textit{\textbf{74.1\sbpm{3.2}}} & \textbf{77.2\sbpm{2.2}} & \textbf{83.2\sbpm{0.7}} & \textit{\underline{\textbf{35.3\sbpm{2.9}}}} & \textit{\underline{\textbf{36.0\sbpm{1.8}}}} & 37.4\sbpm{1.2} \\
    \cline{2-10}
    \#Param & \multicolumn{3}{c|}{\scalebox{0.8}{0}} & \multicolumn{3}{c|}{\scalebox{0.8}{0}} & \multicolumn{3}{c}{\scalebox{0.8}{0}} \\
    \hline

\end{tabular}}
    \caption{Test accuracy of methods with number of learning parameters for classification. \#Shot refers to the the number of training samples per class. Results report means and 95\% confidence interval over five trials. Note that ``\#Param'' refers to parameters specifically for supervised training. }
    \label{tab:sup}
    \vspace{-1.5em}
\end{table}

\section{Experiments}
\label{sec:exp}

We compare our method with supervised learning, semi-supervised learning, non-parametric learning and traditional NPC on MNIST, FashionMNIST and CIFAR-10~\cite{lecun-mnisthandwrittendigit-2010,xiao2017/online,krizhevsky2009learning}. For each dataset, we first train a hierarchical latent generative model with \textit{unlabeled} training sets. During the stage of calculating distance metric using compression, we pick 1,000 samples from test set, due to the cost of compression and pair-wise computation, together with $n=\{5,10,50\}$ labeled images per class from training sets. We also report the result for $n=50$ although it is beyond our setting. We keep the selected dataset same for every method compared with. We use \textbf{bold} to highlight the cases we outperform supervised methods; use \underline{underline} to highligh the case we outperform SSL and use \textit{italic} to highlight the highest accuracy among all methods for reference. We do actual compression with two coding schemes (BB-ANS and Bit-Swap) and also use nELBO for compressed length directly.


\textbf{Comparison with Supervised Learing: }Supervised models are trained on $10n$ labeled data. In~\Cref{tab:sup}, when $n=50$ CNN and VGG surpass NPC-LV. In the cases where the number of labeled data is extremely limited (e.g., $5,10$ labeled data points per class), however, BB-ANS variant outperforms all methods in every dataset. For MNIST, both BB-ANS and Bit-Swap produce more accurate results than supervised methods on 5-shot experiments; BB-ANS performs slightly better than supervised methods in the 10-shot scenario. For FashionMNIST, all three variants outperform in 5-shot, 10-shot, and even 50-shot settings. For CIFAR-10, given 10 labeled data points per class, NPC-LV boosts the accuracy of CNN by 27.2\% and improves the accuracy of VGG by 21.2\%. This enhancement is more significant in 5-shot setting: NPC-LV improves the accuracy of CNN by 50.9\% and by 59.0\% for VGG. In general, we can see as the labeled data become fewer, NPC-LV becomes more advantageous.

\textbf{Comparison with Semi-Supervised Learning: }
The input of NS-FSL is similar to SSL in that both unlabeled data and labeled data are involved. The difference lies in the fact that 1) our training \textit{doesn't} use any labeled data and is purely unsupervised; 2) we only need to train the model \textit{once}, while SSL need to retrain for different $n$; 3) we use fewer labeled data point, which is a more practical setting for real-world problems. We choose strong semi-supervised methods that make little assumption about the dataset. We use consistency regularization methods instead of pseudo-labeling ones as pseudo-labeling methods often assume that decision boundary should pass through low-density region of the input space (e.g.,~\citet{lee2013pseudo}). Specifically, we choose MeanTeacher (``MT'')~\cite{tarvainenmean} and VAT~\cite{miyato2018virtual}. The core of both is based on the intuition that realistic perturbation of data points shouldn't affect the output. We train both models with $n$ labeled samples per class together with an unlabeled training set (training details are shown in~\Cref{appx:train}). 
As we can see, NPC-LV achieve higher accuracy for CIFAR-10 in low-data regime, has competitive result on FashionMNIST and is much lower on MNIST. The strength of our method is more obvious with more complex dataset. It's a surprising result because we do not implement any data augmentation implicitly or explicitly unlike consistency regularization methods, who utilize data perturbation, which can be viewed as data augmentation.
It's worth noting that, on all three datasets, our method using BB-ANS always outperforms \textbf{\textit{at least one}} semi-supervised methods on 10-shot setting, indicating SSL methods trade ``universality'' for ``performance'' while our method is more like a baseline. Speed-wisely, NPC-LV only requires training once for generative model, and can run $k$nn on different shots ($n$) with no additional cost. In contrast, SSL methods require the whole pipeline to be retrained for every $n$.

\begin{table*}[t]
\centering
\begin{small}
\begin{tabular}{||c|c|c|c|c|c|c|c|c|c||}
    \hline
     & \multicolumn{3}{c|}{MNIST} & \multicolumn{3}{c|}{FashionMNIST} & \multicolumn{3}{c||}{CIFAR-10}  \\
     \hline
     & NCD & CLM & CDM & NCD & CLM & CDM & NCD & CLM & CDM \\
     \hline
    gzip & 86.1 & 85.6 & 85.6 & 81.7 & 82.6 & 82.6 & 31.3& 30.3 & 30.3 \\
    bz2 & 86.8 & 86.4 & 86.4 & 81.7 & 79.0 & 79.0 & 28.0 & 27.5 & 27.5 \\
    lzma & 87.4 & 88.5 & 88.5 & 80.6 & 82.7 & 82.7 & 31.4 & 30.0 & 30.0 \\
    WebP & 86.4 & 87.9 & 87.9 & 69.9 & 67.3 & 67.3 & 33.3 & 34.2 & 34.2 \\
    PNG & 86.8 & 89.1 & 89.1 & 74.8 & 76.9 & 76.9 & 32.2 & 28.9 & 28.9 \\
    BitSwap & 93.2 & 90.9 & 93.2 & 84.3 & \textit{84.0} & \textit{84.0} & 36.9 & 36.9 & 36.9 \\
    BBANS & \textit{93.6} & \textit{93.4} & \textit{93.4} & \textit{84.5} & 83.6 & 83.6 & \textit{40.2} & \textit{40.8} & \textit{40.8} \\
    \hline
\end{tabular}
    \caption{Classification accuracy among different compressors and distance metrics. }
    \vspace{-1.2em}
    \label{tab:compress}
\end{small}
\end{table*}

\textbf{Comparison with Non-Parametric Learning: }
In this experiment, we explore the effectiveness of using latent representations directly with $k$NN. We train the same generative model we used in NPC-LV (``Hier''), as well as a vanilla VAE with a single latent variable (``Single''). \Cref{tab:sup} shows that the latent representation of the vanilla one is not as expressive as the hierarchical one. Although latent representation using the hierarchical architecture performs reasonably well and surpasses supervised methods in 5-shot setting on MNIST and FashionMNIST, it's still significantly lower than NPC-LV in all settings. The result suggests that NPC-LV can utilize trained latent variable models more effectively than simply utilizing latent representation for classification.

\textbf{Comparison with NPC: }
\label{sec:npc}
We investigate how NPC with non-neural compressors perform with different distance metrics. We evaluate with NCD, CLM, and CDM as distance metrics, and gzip, bz2, lzma, WebP, and PNG as compressors, using 1,000 images from test set and 100 samples per class from training set. The result is shown in~\Cref{tab:compress}. 
For distance metrics, we can see CLM and CDM perform similarly well but it's not clear under what circumstances a distance metric is superior to the rest.
For compressors, both Bit-Swap and BB-ANS perform much better than other compressors, indicating that generative-model-based compressors can significantly improve NPC. BB-ANS turns out to be the best compressor for classification on all three datasets.

\section{Analyses and Discussion}

\subsection{nELBO as Compressed Length}
As we've shown in~\ref{sec:bbans}, nELBO can be viewed as the expected length of compressed bitstream $N-n_\text{extra}$. Thus, theoretically it can be used directly to approximate compressed length.
In this way, we don't need to apply ANS to VAE for the actual compression, which largely simplifies the method and boosts the speed. 
However, as we can see in~\Cref{tab:sup}, using nELBO doesn't always perform better than the actual compressor like BB-ANS. This may be because nELBO in a well-trained model regards the aggregation of two images as out-of-distribution data points; while the discretization in the actual compressor forces close probability with a certain level of precision discretized to the same bin, lowering the sensitivity. Better aggregation strategies need to be designed to mitigate the gap. 

\begin{wrapfigure}{r}{0.5\textwidth}
  \begin{center}
    \includegraphics[width=0.42\textwidth]{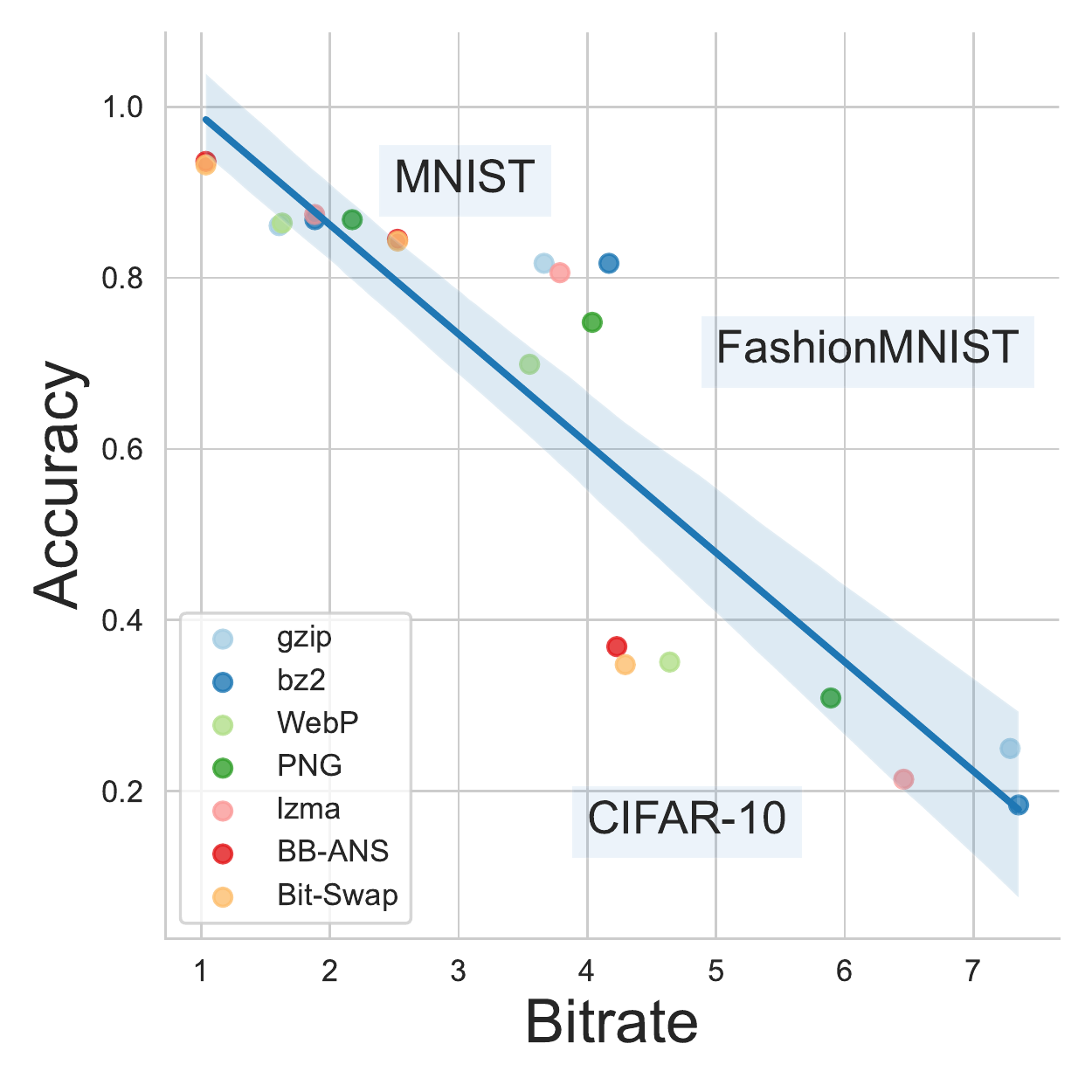}
  \end{center}
    \caption{Bitrate versus Classification Accuracy}
    \label{fig:bit_acc}
\end{wrapfigure}

\subsection{Bitrate versus Classification Accuracy}
\label{sec:bitrate}
The origin of NPC framework comes from the intuition that the length of $x$ after being maximally compressed by a real-world compressor is close to $K(x)$. Theoretically, the closer this length approximates the \textit{minimum} length of the expression ($C(x)\approx K(x)$), the closer the compressor-based distance metrics are to the \textit{normalized information distance}. We investigate, empirically, whether the bitrate actually reflect the classification accuracy. 
We plot bitrate versus classification accuracy for each compressor in~\Cref{tab:compress} on three datasets as shown in~\Cref{fig:bit_acc}. We use the net bitrate, which is $(N-n_\text{extra})/d$, where $N$ is the length of the compressed bitstream, $n_\text{extra}$ is the length of the extra bits, and $d$ is the number of pixels.
As we can see, a very strong monotonic decreasing correlation between bitrate and accuracy emerges, with Spearman coefficient~\cite{spearman1961proof} $r_s=-0.96$, meaning the lower the bitrate is, the higher the classification accuracy is. 
 This means the correlation between bitrate and classification accuracy holds empirically regardless of datasets. It will be interesting to investigate in the future whether the correlation remains for lossy compression.

\subsection{Parallelization and Limitation}
\label{sec:limitation}
The training of generative models can be parallelized using modern GPUs. Compression, however, is not that easy to parallelize. The calculation of CDF and PDF are parallelizable for common probability distributions like Gaussian distribution, but ANS algorithm is not trivial to be parallelized. Fortunately, efficient implementation for ANS on GPUs has been developed~\cite{giesen2014interleaved} to exploit GPU's parallelization power. During distance computation stages, as only pair-wise distance is needed, we can use multi-threads to accelerate the computation.
For classification, we show that NPC-LV performs well in the low labeled data regime, where the complexity of computation may not be a concern, yet the complexity of $O(n^2)$ may still hinder the application involving pair-wise distance computation like clustering on large datasets, unless we exploit the parallelization for compression.

\section{Conclusion}
In this paper, we propose a learning framework Non-Parametric Learning by Compression with Latent Variables (NPC-LV) to address a common learning scenario Non-Supported Few-Shot-Learning (NS-FSL). This framework is versatile in that every module is replaceable, leading to numerous variations. We use image classification as a case study to demonstrate how to use a trained latent generative model directly for downstream classification without further training. It outperforms supervised learning and non-parametric learning on three datasets and semi-supervised learning on CIFAR-10 in low data regime. We thus regard it as a baseline in NS-FSL.
The equivalence between optimizing latent variable models and achieving the shortest code length not only shows how nELBO can be used for classification, but also indicates the improvement of latent probabilistic models can benefit neural compressors. The relationship between compression rate and classification accuracy suggests that the improvement of neural compressors can further benefit classification. Thus, an enhancement of any module in this chain can boost classification accuracy under this framework.

\bibliography{papers}
\bibliographystyle{unsrtnat}

\newpage
\appendix

\section{Ablation Study}
\label{sec:factor}

\begin{figure*}[th]
    \centering
    \includegraphics[width=1\linewidth]{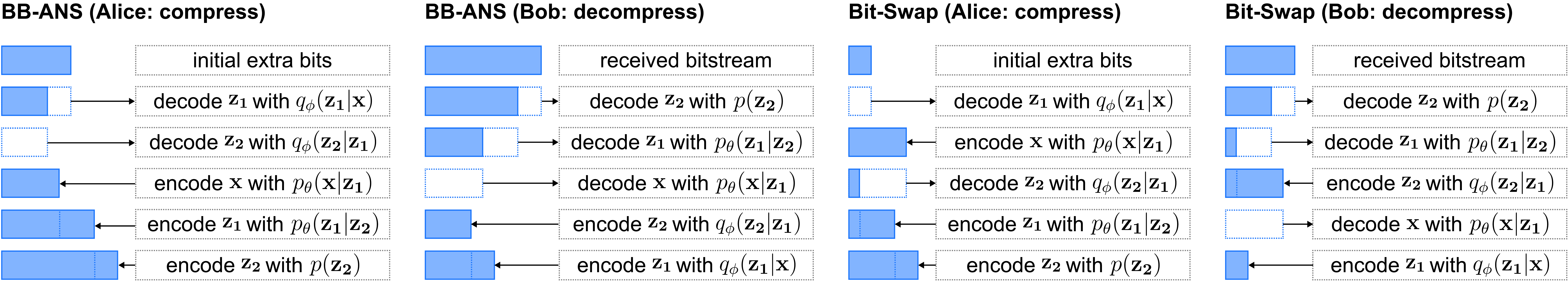}
    \caption{BB-ANS V.S. Bit-Swap}
    \label{fig:bbans-bitswap}
\end{figure*}

\textbf{Number of Latent Variables: }The difference between how BB-ANS~\cite{townsend2018practical} and Bit-Swap~\cite{kingma2019bit} compress is shown in~\Cref{fig:bbans-bitswap}. We can see these two compressors are only different when there are more than one latent variable, and by utilizing encoded bitstreams from previous latent variables, Bit-Swap can reduce the cost of initial bits significantly. We carry out experiments on CIFAR-10 and keep aggregation method constant. We can see in~\Cref{tab:z}, Bit-Swap performs better with more latent variables. But Bit-Swap with eight latent variables is still worse than BB-ANS with two. 

\textbf{Aggregation Methods: }We also evaluate how different aggregation methods affect the classification accuracy, shown in right hand side of~\Cref{tab:z}. We aggregate two images by ``average'', ```minimum'', ``maximum'', ``concatenation'' and ``greyscale+average''. For ``concat'', the actual operation is that we compress one image after another in a way that the compressed bitstream of the first image could be used as the ``extra'' bits for the second image. 
``gs+avg'' means we use greyscale of image and then average pixel values. ``greyscale'' is not a way of aggregation, but we notate it in the table this way for simplicity and comparison, and also to stress this operation is only used during compression.
That is, the generative model is trained on original images instead of on greyscale images. 
By applying simple image processing methods during compression, we find that combining aggregation with image manipulation can be effective as no compressor need to be changed, and thus no retraining for generative models is needed.
We also plot their bitrate v.s. classification accuracy for various aggregation methods. ~\Cref{fig:lv_bit_acc} shows ``concat'' for images are an outlier - lower bitrate with low accuracy. We will show in~\Cref{sec:norm} that ``concat'' disqualifies BB-ANS and Bit-Swap from being a \textit{normal} compressor. Please note the special case of ``concat'' doesn't apply to other compressors like gzip who treat images as bytes at the first place.

\begin{wrapfigure}{r}{0.5\textwidth}
  \begin{center}
    \includegraphics[width=0.48\textwidth]{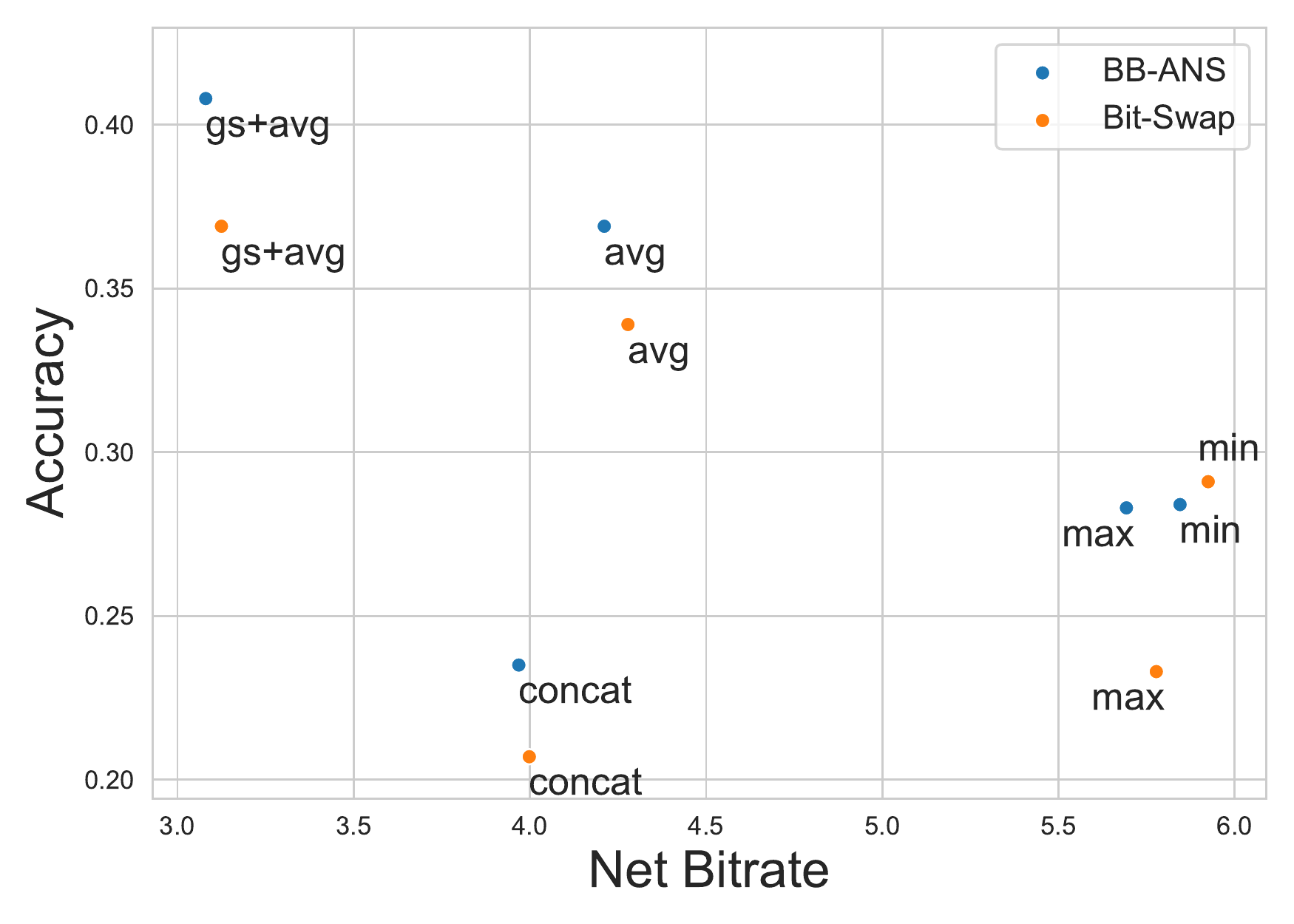}
  \end{center}
    \caption{Bitrate versus Accuracy on CIFAR-10 with various aggregation methods.}
    \label{fig:lv_bit_acc}
\end{wrapfigure}

Bit-Swap~\cite{kingma2019bit} achieves the new state of the art bitrate but why it doesn't suppress BB-ANS for classification? One of the reasons that Bit-Swap achieves better compression rate than BB-ANS is because Bit-Swap requires less initial bits, details shown in~\Cref{appx:init}. However, initial bits can only be amortized when there are multiple datapoints need to be compressed. But in our application, there are at most two datapoints need to be compressed sequentially. This leads us to choose the net bitrate, which excludes the length of the initial bits, and makes Bit-Swap less advantegeous. Theoretically, net bitrate should be the same for both methods. However, empirically we can see in~\Cref{fig:lv_bit_acc}, Bit-Swap uses slightly more net bits than BB-ANS.

Although we cannot draw a definite conclusion that the more a compressor can compress, the more accurate of classification NPC can obtain with the compressor. We can see that given a compressor, there is a correlation between bitrate and accuracy even with different aggregation methods except for aggregation methods that alter a compressor to be not \textit{normal} (e.g., ``concat').  

\textbf{Other Alternatives: }In this work, we don't go thoroughly through all the state of the art compressors, but only focus on the VAE-based lossless compressors. For this specific category, there are already numerous important factors: the choice of architectures, the choice of the number of latent variables, the choice of hierarchy topology (e.g., asymmetrical tree structure or symmetrical one) and the choice of discretization methods. Beyond this line of compressors, deep learning based compressors discussed in~\Cref{sec:npcdgm} can also be used under our framework. 

Beyond compressors, aggregation methods also cause diverging differences in the final classification accuracy. We covers a few basic ones but there are other non-training-required aggregation methods like \textit{linear blend operator} and even completely different aggregation strategies (e.g., using conditional VAE). On colored real-world images like CIFAR-10, image manipulation can be another easy and effective way to improve the accuracy. 
In effect,  ``greyscale'' can be viewed as ``lossy compression'' and this opens up the question of how lossy compressors perform under NPC framework.

\begin{table}[t]
    \centering
    \resizebox{.4\textwidth}{!}{%
    \begin{tabular}{|c|c|c|c|}
        \hline
        \# latent variables $z$ & 1 & 2 & 8 \\
        \hline
        Bit-Swap & 0.226 & 0.339 & 0.348 \\
        BB-ANS & 0.226 & 0.369 & 0.356 \\
        \hline
        \end{tabular}
        }
        \quad
        \resizebox{.51\textwidth}{!}{%
        \begin{tabular}{|c|c|c|c|c|c|}
        \hline
        aggregation & gs+avg & avg & min & max & concat \\
        \hline
        Bit-Swap & 0.369 & 0.339 & 0.291 & 0.233 & 0.207 \\
        BB-ANS & 0.408 & 0.369 & 0.284 & 0.283 & 0.235 \\
        \hline
    \end{tabular}
    }
    \vspace{1em}
    \caption{Effects of number of latent variables and aggregation methods.}
    \label{tab:z}
\end{table}


\section{Normal Compressor}
\label{sec:norm}


\begin{definition}[\textbf{Normal Compressor}]
A compressor is \textit{normal} if it satisfies, up to an additive $O(\log n)$ term, where $n$ means the maximal binary length of an element of $\Omega$:
\begin{enumerate}
    \item Idempotency: $C(xx)=C(x)$ and $C(\epsilon)=0$ where $\epsilon$ is the empty string
    \item Symmetry: $C(xy)=C(yx)$
    \item Monotonicity: $C(xy)\geq C(x)$
    \item Distributivity: $C(xy)+C(z)\leq C(xz)+C(yz)$
\end{enumerate}
\end{definition}

\begin{definition}[\textbf{Metric}]
\label{def:metric}
A distance function $D: \Omega\times\Omega \rightarrow \mathbb{R}^+$ is a \textit{metric} if it satisfies the following 3 criteria for any $x,y,z\in \Omega$, where $\Omega$ is a non-empty set, $\mathbb{R}^+$ represents the set of non-negative real number:

\begin{enumerate}
    \item Identity: $D(x,y) = 0$ iff $x=y$

    \item Symmetry: $D(x,y) = D(y,x)$
    
    \item Triangle Inequality: $D(x,y) \leq D(x,z)+D(z,y)$
\end{enumerate}
\end{definition}

\begin{definition}[\textbf{Admissible Distance}]
\label{def:aid}
A function $D:\Omega\times\Omega\rightarrow \mathbb{R}^+$ is an admissible distance if for every pair of objects $x,y\in\Omega$, the distance $D(x,y)$ is computable, symmetric and satisfies the density condition $\sum_y 2^{-D(x,y)}\leq 1$.
\end{definition}

\citet{cilibrasi2005clustering} formally prove that if the compressor is \textit{normal}, NCD is a normalized \textit{admissible} distance satisfying the metric inequalities, which is shown in~\Cref{def:metric}.~\citet{cebrian2005common} systematically evaluate how far real world compressors like gzip, bz2, PPMZ can satisfy the idempotency axiom. Here we empirically evalute all 4 axioms on MNIST with BB-ANS compressor. We randomly take 100 samples and plot $C(\cdot)$ on LHS as x-axis, $C(\cdot)$ on RHS as y-axis. For simplicity, we use one latent variable, under which circumstance BB-ANS equals to Bit-Swap. As shown in~\Cref{fig:norm_concat}, BB-ANS satisfy monotonicity, symmetry and distributivity. However, it fails on identity axiom, with $C(xx)\approx 1.988 C(x)$. Unlike gzip, bz2, or lzma, BB-ANS doesn't treat the concatenation of two images as a sequence of bytes but images, similar to PNG and WebP, making them hard to satisfy identity axiom. 
By default, the aggregation method refers to ``concatenation''. But simple concatenation does not perform well. Practically, BB-ANS fails the identity test when using ``concatenation''. Without changing the compressor, is it still possible to satisfy the above conditions so that NCD can be used as a normalized admissible distance metric?
We can change the aggregation method. 
We investigate whether BB-ANS with average can satisfy the above conditions. Obviously, identity and symmetry axioms can hold, as $C(\text{avg}(x,x))=C(\frac{x+x}{2})=C(x)$. We empirically evaluate monotonicity and distributivity, and find that they are both satisfied with ``average''. \Cref{fig:norm_avg} illustrates that $C(\text{avg}(x,y))\geq C(x)$ and $C(\text{avg}(x,y))+C(z)\leq C(\text{avg}(x,z))+C(\text{avg}(y,z))$ always hold. 
Given the compressor is normal under ``average'' function, we now prove NCD is an admissible distance metric.
\begin{definition}
\label{def:plus}
Let $D$ be an admissible distance. $D^+(x)$ is defined as $D^+(x)=\max\{D(x,z): C(z)\leq C(x)\}$, and $D^+(x,y)$ is defined as $D^+(x,y)=\max\{D^+(x), D^+(y)\}$
\end{definition}

\begin{lemma}
\label{lemma:ec}
If $C$ is a normal compressor, then $E_c(x,y)+O(1)$ is an admissible distance, where $E_c(x,y)=C(xy)-\min\{C(x)-C(y)\}$ is the compression distance.  
\end{lemma}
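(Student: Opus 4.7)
The goal is to verify the three defining properties of an admissible distance (from Definition of Admissible Distance): that $E_c(x,y)+O(1)$ is (i) computable, (ii) symmetric, and (iii) satisfies the density condition $\sum_y 2^{-(E_c(x,y)+O(1))}\leq 1$. Shifting by an additive $O(1)$ allows us to work with a bound of the form $\sum_y 2^{-E_c(x,y)}\leq 2^{O(1)}$ and then absorb the constant. The first two items are essentially bookkeeping; the density condition is where the content lies and is what I will focus on.

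Computability follows immediately from the fact that $C$ is a real-world compressor: all three quantities $C(x)$, $C(y)$, $C(xy)$ are obtained by running $C$, so $E_c(x,y)$ is computable, and adding an $O(1)$ constant preserves this. For symmetry, the symmetry axiom of a normal compressor gives $C(xy)=C(yx)$ up to an $O(\log n)$ term, and $\min\{C(x),C(y)\}$ is manifestly symmetric in its arguments; hence $E_c(x,y)=E_c(y,x)$ up to $O(\log n)$, which is subsumed into the additive $O(1)$ slack (allowing, strictly, an $O(\log n)$ interpretation of the slack as used elsewhere in the paper).

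The main step is the density condition. The plan is to first use monotonicity together with symmetry to observe $C(xy)=C(yx)\geq C(y)$ and $C(xy)\geq C(x)$, so $C(xy)\geq \max\{C(x),C(y)\}$, and consequently
\[
E_c(x,y) \;=\; C(xy)-\min\{C(x),C(y)\} \;\geq\; C(xy)-C(x).
\]
Fix $x$ and view $y\mapsto C(xy)-C(x)$ as a \emph{conditional} description length. Since a normal compressor produces self-delimiting codewords, the map $y\mapsto \text{code}(xy)$ induces a prefix-free code on $y$ after stripping the (self-delimiting) description of $x$, and therefore Kraft's inequality applies in the conditional form $\sum_y 2^{-(C(xy)-C(x))}\leq 2^{O(1)}$. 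Combining,
\[
\sum_y 2^{-E_c(x,y)} \;\leq\; \sum_y 2^{-(C(xy)-C(x))} \;\leq\; 2^{O(1)},
\]
which, after absorbing the constant into the $O(1)$ shift, gives the density condition.

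The main obstacle is precisely the conditional Kraft step: justifying that for a normal (not universal) compressor, the codewords $\{\text{code}(xy):y\in\Omega\}$ form a prefix-free family once the prefix describing $x$ is accounted for. I expect this to be established by combining the distributivity axiom (to ensure that introducing $x$ does not create prefix conflicts among the $y$-codes) with the self-delimiting nature of the compressor's output, at the cost of a constant number of extra bits that get folded into the $+O(1)$ term in the statement. If the refined Kraft bound can only be obtained up to an absolute constant $c$, then the overall inequality still reads $\sum_y 2^{-E_c(x,y)}\leq 2^{c}$, and shifting $E_c$ by this same $c$ restores $\sum_y 2^{-(E_c(x,y)+c)}\leq 1$, completing admissibility.
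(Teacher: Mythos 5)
The paper does not actually prove \Cref{lemma:ec}: it is stated and then used in the proof of the subsequent theorem, but the lemma itself is imported (without proof) from Cilibrasi and Vit\'anyi's work on the normalized compression distance, so there is no paper-internal argument to compare yours against. Evaluating your attempt on its own terms: the computability and symmetry checks are fine, but the density step, which you correctly identify as the crux, contains a genuine gap. You reduce to the claim $\sum_y 2^{-(C(xy)-C(x))}\leq 2^{O(1)}$ and propose to justify it by treating $y\mapsto C(xy)-C(x)$ as the length of a conditional prefix code, arguing that the codewords assigned to $\{xy:y\in\Omega\}$ become prefix-free once a self-delimiting description of $x$ is ``stripped off.'' That structure is not a consequence of the normal-compressor axioms, and it is simply false for actual compressors: gzip's output on $xy$ does not contain gzip's output on $x$ as a separable prefix. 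What Kraft's inequality does give (assuming $C$ emits a prefix-free code on its whole domain) is $\sum_y 2^{-C(xy)}\leq 1$, and hence only $\sum_y 2^{-(C(xy)-C(x))}\leq 2^{C(x)}$, a bound that grows with $x$ and cannot be absorbed into an additive $O(1)$.

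Your proposed repair via distributivity also points in the wrong direction. Taking $z=\epsilon$ in distributivity and using idempotency gives subadditivity, $C(xy)\leq C(x)+C(y)+O(\log n)$, which is an \emph{upper} bound on $C(xy)$; the density condition instead needs a \emph{lower} bound on $C(xy)-\min\{C(x),C(y)\}$ that makes $\sum_y 2^{-E_c(x,y)}$ converge to at most a constant. None of the four normal-compressor axioms (idempotency, symmetry, monotonicity, distributivity) supplies such a lower bound, so the conditional-Kraft step is not established, and the argument as planned does not close. If you want to fill this in, you should go back to the original source for the precise form of the density requirement used there (it is stated with an $O(\log n)$ slack tied to string length), because the ``standard'' density bound you are trying to hit is stronger than what the compressor axioms can deliver.
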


\begin{lemma}
\label{lemma:plus}
If $C$ is a normal compressor, then $E_c^+(x,y)=\max\{C(x), C(y)\}$
\end{lemma}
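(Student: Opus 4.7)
The plan is to unfold the definition of $E_c^+(x,y)$ and show that the ``worst'' $z$ (with $C(z) \le C(x)$) making $E_c(x,z)$ large is actually $z = \epsilon$, which yields $E_c(x,z) = C(x)$; matching this as an upper bound uses distributivity. Since $E_c^+(x,y) = \max\{E_c^+(x), E_c^+(y)\}$ by definition, proving $E_c^+(x) = C(x)$ for every $x$ (up to the usual $O(\log n)$ slack that normal compressors enjoy) immediately gives the claim.

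First I would fix $x$ and look at $E_c^+(x) = \max\{E_c(x,z) : C(z) \le C(x)\}$. For any such $z$ we have $\min\{C(x), C(z)\} = C(z)$, so $E_c(x,z) = C(xz) - C(z)$. To upper bound this, I would invoke distributivity $C(xy) + C(z) \le C(xz) + C(yz)$ with the empty string $y = \epsilon$, and then use idempotency $C(\epsilon) = 0$ together with $C(x\epsilon) = C(x)$, to obtain the subadditivity relation $C(xz) \le C(x) + C(z)$ (up to $O(\log n)$). This immediately gives $E_c(x,z) \le C(x)$. To realize this bound, I would simply plug in $z = \epsilon$: then $C(z) = 0 \le C(x)$, and $E_c(x,\epsilon) = C(x\epsilon) - C(\epsilon) = C(x)$ by idempotency. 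Thus $E_c^+(x) = C(x)$ up to an additive $O(\log n)$ term, and the lemma follows by taking the max over $x$ and $y$.

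The main obstacle is bookkeeping the $O(\log n)$ precision rather than any genuinely difficult manipulation: every normal-compressor axiom holds only up to an additive $O(\log n)$, so the subadditivity derivation and the use of idempotency at $\epsilon$ each contribute such a term, and I would need to state the final equality as $E_c^+(x,y) = \max\{C(x), C(y)\} + O(\log n)$, consistent with the convention used for Lemma~\ref{lemma:ec}. A small subtlety worth flagging is that the derivation of $C(xz) \le C(x) + C(z)$ from distributivity uses idempotency in the form $C(x\epsilon) = C(x)$ and $C(z\epsilon) = C(z)$; invoking Lemma~\ref{lemma:ec} ensures that $E_c$ is already admissible, so we are entitled to talk about $E_c^+$ via Definition~\ref{def:plus} in the first place.
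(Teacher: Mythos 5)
The paper does not actually prove Lemma~\ref{lemma:plus}; it states it (following Cilibrasi and Vit\'anyi) and uses it in the proof of the theorem. Your proof is the standard one and the overall strategy is sound: bound $E_c(x,z)$ from above by $C(x)$ via subadditivity, then realize the bound at $z=\epsilon$. There is one concrete slip worth fixing. You derive subadditivity $C(xz) \le C(x)+C(z)$ by ``invoking distributivity $C(xy)+C(z)\le C(xz)+C(yz)$ with $y=\epsilon$.'' But substituting $y=\epsilon$ into that inequality gives $C(x\epsilon)+C(z)\le C(xz)+C(\epsilon z)$, i.e.\ $C(x)\le C(xz)$, which is monotonicity, not subadditivity. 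To actually get subadditivity you must set the \emph{third} slot to $\epsilon$: writing distributivity as $C(ab)+C(c)\le C(ac)+C(bc)$ and taking $a=x$, $b=z$, $c=\epsilon$ gives $C(xz)+C(\epsilon)\le C(x\epsilon)+C(z\epsilon)=C(x)+C(z)$, hence $C(xz)\le C(x)+C(z)$. With that relabeling the rest of your argument goes through exactly as written: for any $z$ with $C(z)\le C(x)$ we have $E_c(x,z)=C(xz)-C(z)\le C(x)$, while $E_c(x,\epsilon)=C(x\epsilon)-C(\epsilon)=C(x)$ attains the supremum, so $E_c^+(x)=C(x)$ up to $O(\log n)$, and taking the max over $x,y$ finishes the lemma.

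One more small point: you use $C(x\epsilon)=C(x)$ as an application of ``idempotency,'' but the stated idempotency axiom only covers $C(xx)=C(x)$ and $C(\epsilon)=0$. The equality $C(x\epsilon)=C(x)$ is better justified by noting that $x\epsilon$ and $x$ are literally the same string (concatenation with the empty string is the identity), so this is an identity of arguments rather than a property of the compressor. With those two corrections your bookkeeping of the $O(\log n)$ slack and the appeal to Definition~\ref{def:plus} via Lemma~\ref{lemma:ec} are both appropriate.
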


\begin{theorem}
If the compressor is normal, then the NCD is a normalized admissible distance satifying the metric (in)equalities.
\end{theorem}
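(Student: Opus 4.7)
The plan is to reduce the theorem to two tasks: (i) rewrite NCD in a form that exposes the admissible-distance structure already established for $E_c$; and (ii) check the normalized metric (in)equalities axiom by axiom, invoking exactly one of the four normality axioms in each verification. By \Cref{lemma:ec} the numerator $E_c(x,y)=C(xy)-\min\{C(x),C(y)\}$ is (up to $O(1)$) an admissible distance, and by \Cref{lemma:plus} the denominator equals $\max\{C(x),C(y)\}=E_c^{+}(x,y)$, so $\mathrm{NCD}(x,y)=E_c(x,y)/E_c^{+}(x,y)$. In this form, normalized admissibility reduces to showing $0\le \mathrm{NCD}\le 1$, computability, and symmetry; and the metric inequalities reduce to checking identity, symmetry, and the triangle inequality for this ratio, all up to an additive $O(\log n)$ slack in the numerator, which gives the claimed $O(\log n/n)$ precision after dividing by $E_c^{+}$.

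\paragraph{Verification of the easy axioms.}
For the range, monotonicity gives $C(xy)\ge C(x)$ and, combined with symmetry of $C$, also $C(xy)\ge C(y)$, so $C(xy)\ge \max\{C(x),C(y)\}$, which forces the numerator to be at least $\max-\min\ge 0$; hence $\mathrm{NCD}\ge 0$. For the upper bound $\mathrm{NCD}\le 1$, I would instantiate distributivity with $z=\varepsilon$ and then use idempotency $C(\varepsilon)=0$ to obtain $C(xy)\le C(x)+C(y)$, equivalently $C(xy)-\min\{C(x),C(y)\}\le \max\{C(x),C(y)\}$. Computability and symmetry of $\mathrm{NCD}$ are immediate from the corresponding properties of $C$ (using $C(xy)=C(yx)$ up to $O(\log n)$). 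Identity follows directly from idempotency: $\mathrm{NCD}(x,x)=(C(xx)-C(x))/C(x)=0$ modulo the normality slack.

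\paragraph{Triangle inequality — the main obstacle.}
The substantive step is $\mathrm{NCD}(x,y)\le \mathrm{NCD}(x,z)+\mathrm{NCD}(z,y)$. My approach is to first derive a clean additive bound on the numerator from distributivity: applying the axiom in the form $C(xy)+C(z)\le C(xz)+C(yz)$ (and invoking symmetry of $C$ to align arguments) yields
\[
C(xy)-\min\{C(x),C(y)\}\;\le\;\bigl(C(xz)-\min\{C(x),C(z)\}\bigr)+\bigl(C(zy)-\min\{C(z),C(y)\}\bigr)
\]
after a short algebraic rearrangement in which the $C(z)$ term on the left cancels against one of the two $\min$ terms on the right. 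The delicate part is then passing from this \emph{additive} bound on $E_c$ to the \emph{ratio} inequality, because the three denominators $\max\{C(x),C(y)\}$, $\max\{C(x),C(z)\}$, $\max\{C(z),C(y)\}$ can differ. I would handle this by casework on the ordering of $C(x),C(y),C(z)$; in each case one shows that the two denominators on the right are each at least as large as the denominator on the left (up to the standard $O(\log n)$ slack), so dividing termwise preserves the inequality. I expect this casework, together with careful tracking of the $O(\log n)$ error terms so that the final precision is $O(\log n/n)$ when divided by $E_c^{+}(x,y)=\Theta(n)$, to be the main technical obstacle; every other axiom reduces to a one-line application of a single normality property.
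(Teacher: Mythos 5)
Your overall plan (invoke Lemma~\ref{lemma:ec} and Lemma~\ref{lemma:plus} for admissibility, then check the three metric axioms one by one) matches the paper, and your verification of identity, symmetry, and the $[0,1]$ range is fine. The gap is in the triangle inequality, and it is substantive.

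First, your stated criterion for when termwise division of the additive bound
\[
E_c(x,y)\le E_c(x,z)+E_c(z,y)
\]
yields the ratio inequality is backwards. To pass from $a\le b+c$ to $a/A\le b/B+c/C$ you need the right-hand denominators $B,C$ to be \emph{at most} $A$, not at least $A$: if $B,C\ge A$ then $b/B+c/C\le (b+c)/A$ and you are comparing $a/A$ against something smaller than you started with, which proves nothing. Second, and more importantly, the direction you actually need ($B,C\le A$) simply fails in the case $C(z)>\max\{C(x),C(y)\}$: there both $\max\{C(x),C(z)\}$ and $\max\{C(z),C(y)\}$ equal $C(z)$, which strictly exceeds the left-hand denominator $\max\{C(x),C(y)\}$. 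This is not a corner case you can wave away; it is exactly the situation the paper singles out (they WLOG take $C(x)\le C(y)\le C(z)$). Your ``casework and divide termwise'' plan closes the two easy orderings of $C(z)$ and leaves the hard one untouched.

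The paper's proof uses a genuinely different device at this point. After subtracting $C(x)$ from the distributivity inequality and dividing through by $C(y)$, they set $\Delta=C(z)-C(y)\ge0$ and add $\Delta$ to both numerator and denominator on the right, using the monotonicity fact that for a nonnegative fraction, adding the same nonnegative quantity to numerator and denominator moves the value toward $1$; they then split into the subcases where the right-hand side is $\le 1$ or $>1$ to control the direction of that move. That extra denominator-adjustment step is precisely what bridges the different denominators, and it is what your sketch is missing. You should either reproduce that $\Delta$ argument or find another way to handle $C(z)>\max\{C(x),C(y)\}$; as written, your proof does not close.

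A minor remark: the paper proves this for the averaging aggregation $C(\operatorname{avg}(x,y))$ (since that is what makes BB-ANS normal), whereas you write $C(xy)$ throughout. Structurally the argument is identical once normality of the aggregated compressor is granted, so this is a notational mismatch rather than an error, but you should keep the aggregation consistent with the hypothesis that the compressor is normal.
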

\begin{proof}
\Cref{lemma:ec} and~\Cref{lemma:plus} show that NCD is a normalized admissible distance. We now show how NCD satisfies the metric (in)equalities.
\begin{enumerate}
    \item For identity axiom, 
    \begin{align}
        \text{NCD}(x,x)=\frac{C(\text{avg}(x,x))-C(x)}{C(x)}=0.
    \end{align}
    
    \item For symmetry axiom, 
    \begin{align}
        \text{NCD}(x,y) & =\frac{C(\text{avg}(x,y))-\min\{C(x),C(y)\}}{\max\{C(x),C(y)\}} = \text{NCD}(y,x).
    \end{align}
    \item For triangle inequality, without loss of generality, we assume $C(x)\leq C(y)\leq C(z)$. As $\text{NCD}$ is symmetrical, there are three triangle inequalities that can be expressed by $\text{NCD}(x,y), \text{NCD}(y,z), \text{NCD}(x,z)$. For simplicity, we prove one of them, $\text{NCD}(x,y)\leq \text{NCD}(x,z)+\text{NCD}(z,y)$ as the procedure for the other two is similar.
    Since BB-ANS is a \textit{normal} compressor under ``avg'', we have distributivity: $C(\text{avg}(x,y))+C(z)\leq C(\text{avg}(x,z))+C(\text{avg}(z,y))$. Subtracting $C(x)$ from both sides and rearraging results in $C(\text{avg}(x,y))-C(x)\leq C(\text{avg}(x,z))-C(x)+C(\text{avg}(z,y))-C(z)$. Dividing by $C(y)$ on both sides, we have \\
    \begin{align}
    & \frac{C(\text{avg}(x,y))-C(x)}{C(y)}  \leq \frac{C(\text{avg}(x,z))-C(x)+C(\text{avg}(z,y))-C(z)}{C(y)}.
    \end{align}
    We know LHS$\leq 1$ and RHS can be $\leq 1$ or $> 1$.
    \begin{enumerate}
        \item RHS$\leq 1$: Let $C(z)=C(y)+\Delta$; adding $\Delta$ to both the numerator and denominator of RHS increases RHS and makes it closer to 1.
        \begin{align}
        \frac{C(\text{avg}(x,y))-C(x)}{C(y)} &\leq \frac{C(\text{avg}(x,z))-C(x)}{C(y)+\Delta}  + \frac{C(\text{avg}(z,y))-C(z)+\Delta}{C(y)+\Delta}\\
        & = \frac{C(\text{avg}(x,z))-C(x)}{C(z)} + \frac{C(\text{avg}(z,y))-C(y)}{C(z)}.
        \end{align}
    
        \item RHS$> 1$: The procedure is similar to the case when RHS$\leq 1$. The difference is that adding $\Delta$ on both numerator and denominator makes RHS decrease instead of increase. But RHS cannot decrease less than 1. Thus, we still have
        \begin{align}
        & \frac{C(\text{avg}(x,y))-C(x)}{C(y)}  \leq \frac{C(\text{avg}(x,z))-C(x)}{C(z)} + \frac{C(\text{avg}(z,y))-C(y)}{C(z)}.
        \end{align}
    \end{enumerate}

\end{enumerate}
\end{proof}

\begin{figure}[h]
    \centering
    \includegraphics[width=.8\linewidth]{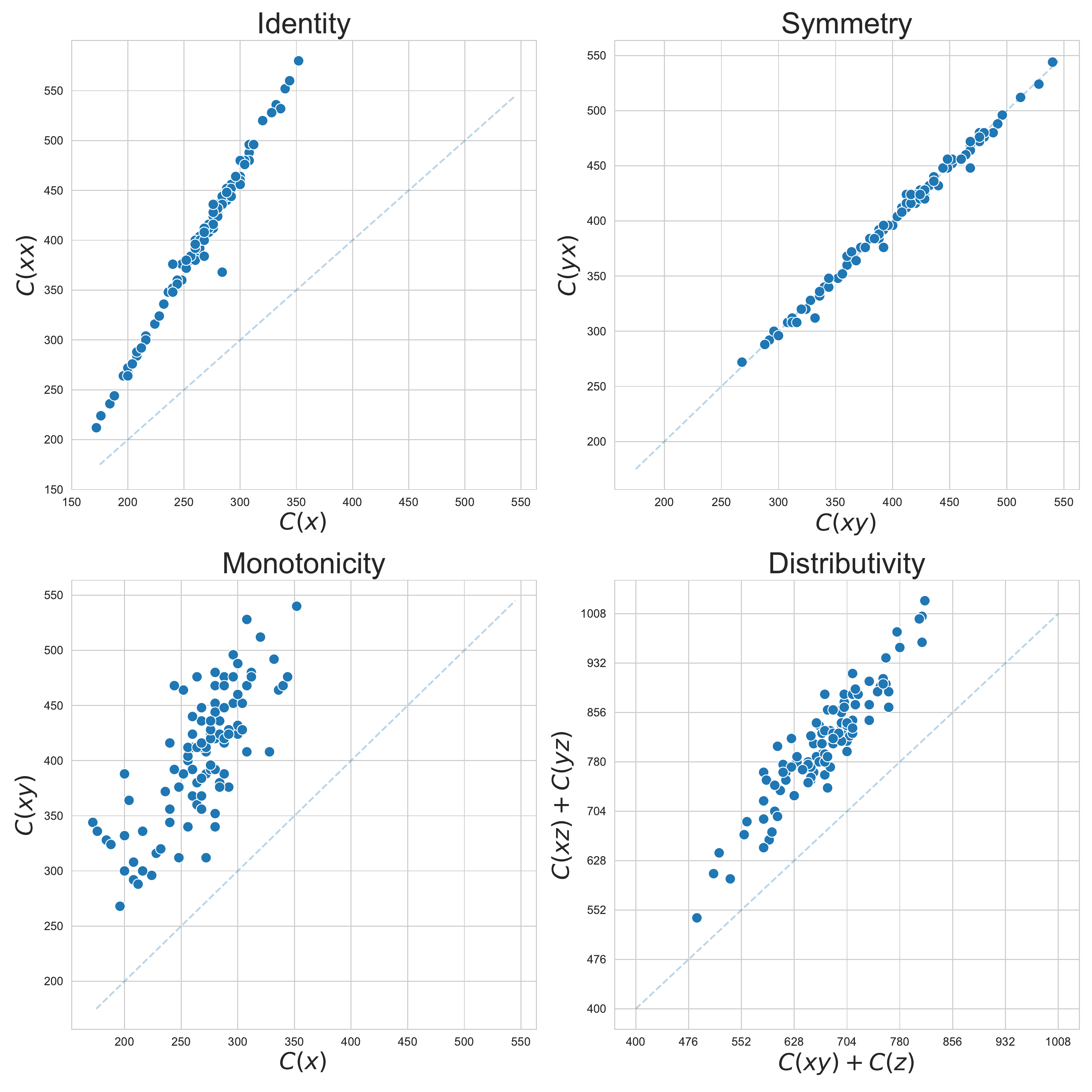}
    \caption{BB-ANS Normal Compressor Test for ``concatenation''}
    \label{fig:norm_concat}
\end{figure}

\begin{figure}[h]
    \centering
    \includegraphics[width=.8\linewidth]{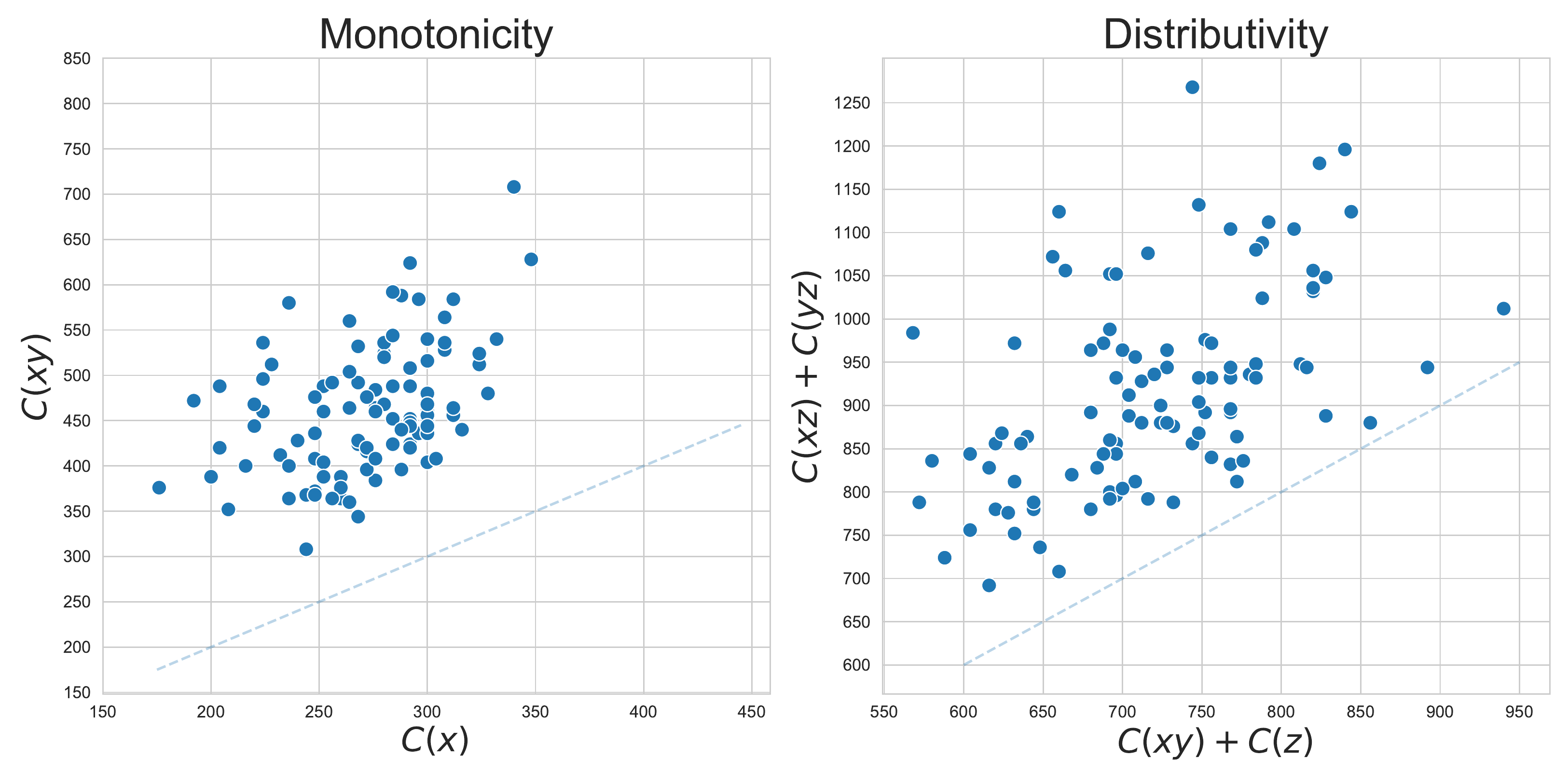}
    \caption{BB-ANS Normal Compressor Test for ``average''}
    \label{fig:norm_avg}
\end{figure}

\section{Training Details}
\label{appx:train}
For both CNN and VGG, we tune hyperparameters on a validation set to get the optimized performance. The reason that we use VGG11 instead of VGG16 or VGG19 is because VGG11 performs the best in low data regime in our experiments. For CNN, we first normalize MNIST, FashionMNIST and CIFAR-10. We use $\text{batch size}=4$, $\text{epoch number}=14$, Adadelta~\cite{zeiler2012adadelta} as the optimizer with $\text{learning rate}=1$, decaying learning rate by $\gamma=0.7$ every step for MNIST and FashionMNIST. Some of hyperparameters are from \href{https://github.com/pytorch/examples/tree/master/mnist}{PyTorch's official MNIST example}. We use the same hyperparameters except we increase the number of epoch to 20. For VGG11 on MNIST and FashionMNIST, we use $\text{epoch number}=20$ when given 50 samples per class, and use $\text{epoch number}=40$ when given less. We use Adam~\cite{kingma2015adam} with $\text{learning rate}=0.0001$ as the optimizer, and $\text{batch size}=4$. For CIFAR-10, we use $\text{learning rate}=0.00001$, $\text{epoch number}=80$.

For MeanTeacher and VAT, we follow~\citet{zhang2021flexmatch}'s \href{https://github.com/TorchSSL/TorchSSL}{implementation} and hyperparameters setting. We use WideResNet~\cite{zagoruyko2016wide} with $\text{depth=28}$ and $\text{widen factor}=2$ as the architecture for all three datasets and iterate 60,000 for MNIST and FashionMNIST, and iterate 200,000 for CIFAR-10. SGD with momentum is used for all three datasets, with $\text{learning rate}=0.03$, $\text{momentum}=0.9$.

For latent variable models used in this paper, we follow the training procedure and hyperparameters in~\citet{kingma2019bit} --- four `Processing' Residual blocks at the beginning of the inference model and the end of the generative model; eight `Ordinary' Residual block in total for all latent layers in both inference model and generative model. The Dropout~\cite{srivastava2014dropout} rate is 0.2 for MNIST and FashionMNIST, 0.3 for CIFAR-10. The learning rate for all datasets is $0.002$ with Adam optimizer. Dimension of latent variables for MNIST and FashionMNIST is $1\times 16\times 16$ while dimension of latent variables for CIFAR-10 is $8\times 16\times 16$. During $k$NN, we use $k=2$ for MNIST and FashionMNIST and $k=3$ for CIFAR-10.

We use one NVIDIA Tesla P40 GPU for training and compression. For pairwise computation in 50-shot setting, it takes roughly ten hours to calculate distance matrix on MNIST and FashionMNIST; it takes about thirty hours for CIFAR10. But once the distance matrix is calculated, evaluation on 5-shot or 10-shot just takes seconds. For both CNN and VGG, it takes about half an hour to train on 50-shot for one experiment. For VAT and MT, we need to re-train for every shot setting. For MNIST and FashionMNIST it takes about three hours to run one experiment in a single shot setting and for CIFAR10 it takes about twelve hours. The time that VAT and MT take positively relate to the number of iterations, which makes them slower than our method in the 5 and 10-shot but faster in the 50-shot setting.

\section{Details of ANS}
\label{appx:ans}
We briefly introduce ANS and show the proof of the optimal code length obtained from ANS.
The essence of ANS is to encode one or more data points into a single natural number, called state $s\in \mathbb{N}$. Depending on different vocabularies and manipulations, there are different variations of ANS (details can be seen in~\citet{duda2015use}. We introduce one of them - rANS (range ANS), which is the variant we use in this paper. The notation we use here is unconventional in order to be consistent with the main part of the paper.

Let’s notate our state at timestamp $t$ as $s_t\in \mathbb{N}$, and notate our symbol/message at $t$ as $x_t$, $x_t\in V$, where $V=\{0,1\}$ is the vocabulary set.
We have two simple methods to encode a binary sequence into a natural number bit by bit --- $s_t = 2s_{t-1}+x_t$ or $s_t = s_{t-1}+2^mx_t$. The former means appending information to the least significant position while the latter is adding information to the most significant position. It's obvious that encoding new symbol into the most significant position requires remembering $m$ while encoding in the least significant position only need previous state $s_{t-1}$ and new information $x_t$. 
It's also easy to decode: depending on whether the current state $s_t$ is even or odd, we not only know if the last encoded symbol $x_t$ is $0$ or $1$, but we can also decode the state following $s_{t-1} = \frac{s_t}{2}$ or $s_{t-1} = \frac{s_t-1}{2}$. 

The above example illustrates encoding and decoding methods when there are two elements in the vocabulary with uniform distribution $p(0)=p(1)=\frac{1}{2}$. In this case, it's optimal to scale up $s_t$ to two for both $0$ or $1$ as we essentially only spend 1 bit per encoded symbol. However, when the probability is not uniformly distributed, the entropy is smaller, and scaling up by 2 for both symbols will not be optimal anymore.
rANS generalizes the process to any discrete probability distribution and any size of vocabulary.

Intuitively, scaling up by a smaller factor for a more probable symbol and scaling up by a larger factor for a less probable symbol will provide us with a more efficient representation.
Concretely, we have a sequence of messages $\xv = (x_1,x_2,x_3,...,x_n)$, and a vocabulary $V=\{v_1, v_2, v_3, ..., v_k\}$, with size $k$, $x_i\in V$. We also have probability mass distribution of $V$: $P=\{p_{v_1}, p_{v_2}, p_{v_3}, ..., p_{v_k}\}$. Correspondingly, let's define frequency counts $F=\{f_{v_1}, f_{v_2}, f_{v_3}, ..., f_{v_k}\}$, $f_{v_i} = p_{v_i} \times M$ where $M=\sum_{i=1}^k f_{v_i}$. $M$ can be viewed as a multiplier, demonstrating the precision of ANS, which is a predefined variable in the implementation. We can also get cumulative frequency counts from $F$ as follows $B=\{b_{v_1}, b_{v_2}, b_{v_3}, ..., b_{v_k}\}$ where $b_{v_i}=\sum_{j=1}^{i-1}f_{v_j}$. Now we get everything we need to define the encoding function $G$:

\begin{equation}
\label{eq:ans_enc}
\begin{split}
    & s_t = G(s_{t-1}, x_t), \\
    & G(s_{t-1}, x_t) = \floor*{\frac{s_{t-1}}{f_{x_t}}}\times M + b_{x_t} + s_{t-1}~\text{mod}~f_{x_t}.
\end{split}
\end{equation}

The procedure can be interpreted as follows: We have various $M$-sized blocks partitioning natural number $\mathbb{N}$. Encoding can be viewed as finding the exact location of the natural number that represents the state, by first finding the corresponding block ($\floor*{\frac{s_{t-1}}{f_{x_t}}}\times M$) followed by finding the sub-range representing that symbol within $M$ ($b_{x_t}$) and finding the exact location within that sub-range ($s_{t-1}~\text{mod}~f_{x_t}$). 
The decoding function $H(s_t)$ is the reverse of the encoding:
\begin{equation}
\begin{split}
    & s_{t-1}, x_t = H(s_t), \\
    & x_t = \text{argmax}~\{b_{x_t} < (s_{t}~\text{mod}~M) \}, \\
    & s_{t-1} = f_{x_t}\floor*{\frac{s_t}{M}}+s_{t}~\text{mod}~M-b_{x_t}.
\end{split}
\end{equation}
We first find the precise location within the sub-range using inverse function of cumulative counts ($\text{argmax}~\{b_{x_t} < (s_{t}~\text{mod}~M) \}$). With $x_t$ we can reverse steps in~\Cref{eq:ans_enc} to get the previous state. 
As we can see ANS decodes in the reverse order of encoding (i.e., last in first out), which makes it compatible with bits-back argument. 

From encoding function, we know that:
\begin{equation}
    \frac{s_t}{s_{t-1}}\approx \frac{M}{f_{x_t}} = \frac{1}{p_{x_t}}.
\end{equation}
Encoding a sequence of symbols $\xv$ results in:
\begin{equation}
    s_n\approx \frac{s_0}{p_{x_1}p_{x_2}...p_{x_n}}.
\end{equation}
Thus, the total coding length is:
\begin{equation}
    \log s_n \approx \log s_0 +  \sum_{i=1}^n \log \frac{1}{p_{x_i}},
\end{equation}
where $s_0$ refers to the initial state. Dividing by $n$ we will get the average coding length that approximates the entropy of the data.

\section{Discretization}
\label{appx:disc}
ANS is defined for symbols in a finite alphabet; bits-back coding works for discrete latent variables. However, continuous latent variables are proven to be powerful in many latent variable models. In order to use those latent variable models for lossless compression, discretizing continuous variables into discrete ones is a necessary step.
\citet{townsend2018practical} derives, based on~\citet{mackay2003information}, that using bits-back coding continuous latent variables can be discretized to arbitrary precision without affecting the compression rate. Suppose a probability density function $p$ is approximated using a number of ``buckets'' of equal width $\sigma\zv$. For any given bucket $j$, we can know its probability mass $p(\zv^{(j)})\sigma\zv$ where $\zv^{(j)}$ is some point in the bucket $j$. Let's notate the discrete distribution as $P$ and $Q$ for both prior and posterior distribution. Then for any given bucket $j$, $P(j)\approx p(\zv^{(j)})\sigma \zv$. The expected message length with a discretized latent variable is:
\begin{equation}
    -\mathbb{E}_{Q(j|\xv)}\log \frac{p(\xv|\zv^{(j)})p(\zv^{(j)})\sigma\zv}{q(\zv^{(j)}|\xv)\sigma\zv}.
    \label{eq:dis}
\end{equation}
The width of buckets $\sigma\zv$ is cancelled. Therefore, as long as the bins for inference models match the generative models, continuous latent variables can be discretized up to an arbitrary precision.

In this paper we only consider the basic discretization techniques like dividing continuous distribution into bins with \textit{equal width} or \textit{equal mass}. We discretize the prior (top layer) with equal mass and all subsequent latent layers with equal width. As~\Cref{eq:dis} shows, ideally we want the discretization aligns between inference models and generative models. However, discretization of $\zv_i\sim p_\theta(\zv_i|\zv_{i+1})$ relying on $\zv_i\sim q_\phi(\zv_i|\zv_{i-1})$ is not possible without sampling. In the compression stage, when decoding $\zv_i$, $\zv_{i+1}$ is not available and so is $p_\theta(\zv_i|\zv_{i+1})$. In the decompression stage, similarly, $q_\phi(\zv_i|\zv_{i-1})$ is not available for $p_\theta(\zv_i|\zv_{i+1})$ to match with. Therefore, we need to sample from training dataset beforehand to get unbiased estimates of the statistics of the marginal distribution~\cite{kingma2019bit}. This process only needs to be done once and can be saved for the future use.

\section{Initial Bits of BB-ANS and Bit-Swap}
\label{appx:init}
The main difference between BB-ANS and Bit-Swap is that BB-ANS requires the sender \textit{Alice} to decode $\zv_{i+1}$ with $q_\phi(\zv_{i+1}|\zv_i)$ for $i$ from 1 to $L-1$ first, and then encode $\zv_i$ with $p_\theta(\zv_i|\zv_{i+1})$ for $i$ from 1 to $L-1$. While Bit-Swap interleaves this encoding and decoding procedure and applies recursively for latent variables, as illustrated in~\Cref{fig:bbans-bitswap}. The advantage of Bit-Swap's procedure is that, after decoding $\zv_1$, the bits encoded from $x$ can be taken advantage in decoding $\zv_2$; then bits encoded from $\zv_1$ can be used for decoding $\zv_3$. As a result, the initial bits required for Bit-Swap is much less than BB-ANS. Concretely, for BB-ANS, the minimum initial bits required:
\begin{equation}
    -\log q_\phi(\zv_1|\xv)-\sum_{i=1}^{L-1}\log q_\phi(\zv_{i+1}|\zv_i).
\end{equation}
For Bit-Swap, the minimum initial bits requires:
\begin{equation}
\begin{split}
    & -\max~(0, \log q_\phi(\zv_1|\xv))+\sum_{i=1}^{L-1}\max~\left(0, \log\frac{p_\theta(\zv_{i-1}|\zv_i)}{q_\phi(\zv_{i+1}|\zv_i)}\right) \\
    & \leq -\log q_\phi(\zv_1|\xv)-\sum_{i=1}^{L-1}\log q_\phi(\zv_{i+1}|\zv_i).
\end{split}
\end{equation}
The initial bits Bit-Swap requires is less than BB-ANS, making Bit-Swap reaching the optimal compression rate.

\section{Hierarchical Latent Variable Models}
\label{appx:hier}
The hierarchical autoencoder in the paper uses deep latent gaussian models (DLGM)~\cite{rezende2014stochastic} following the sampling process based on Markov chains, whose marginal distributions are:
\begin{equation}
\begin{split}
    p_\theta(\xv) &= \int p_\theta(\xv|\zv_1)p_\theta(\zv_1)d\zv_1, \\
    p_\theta(\zv_1) &= \int p_\theta(\zv_1|\zv_2)p_\theta(\zv_2)d\zv_2, \\
    &... \\
    p_\theta(\zv_{L-1}) &= \int p_\theta(\zv_{L-1}|\zv_L)p_\theta(\zv_L)d\zv_L.
\end{split}
\end{equation}
Combining the above equations, the marginal distribution of $\xv$ is:
\begin{equation}
    p_\theta(\xv) = \int p_\theta(\xv|\zv_1)p_\theta(\zv_1|\zv_2)...p_\theta(\zv_{L-1})|\zv_L)p_\theta(\zv_L)d\zv_{1:L}.
\end{equation}
Accordingly, inference models $q_\phi(\zv_{i+1}|\zv_i)$ need to be defined for every latent layer. ELBO that includes multiple latent variables then becomes:
\begin{equation}
    \mathbb{E}_{q_\phi (\cdot|\xv)}[\log p_\theta(\xv, \zv_{1:L})-\log q_\phi (\zv_{1:L}|\xv)].
\end{equation}

In this paper, we use Logistic distribution ($\mu=0, \sigma=1$) as the prior $p(\zv_L)$, and use conditional Logistic distribution for both inference models $q_\phi(\zv_{i+1}|\zv_i), q_\phi(\zv_1|\xv)$ and generative models $p_\theta(\zv_i|\zv_{i+1})$. These distributions are modeled by neural networks, which is stacked by Residual blocks~\cite{he2016deep} as hidden layers. 
More architecture details can be referred to~\citet{kingma2019bit}, where they also discusses other possible topologies regarding to hierarchical latent variable models.

\section{Proof of Universality of Information Distance}
\label{proof:uni}
\textit{Information Distance} $E(x,y)$ refers to the length of the shortest binary program generated by universal prefix Turing machine, that with input $x$ computes $y$, and with input $y$ computes $x$. It's shown that $E(x,y)=\max\{K(x|y), K(y|x)\}$. We now prove~\Cref{theo:id}, based on~\Cref{lemma:uni}~\cite{bennett1998information}.
\begin{lemma}
\label{lemma:uni}
For every upper-semicomputable function $f(x,y)$, satisfying $\sum_{y}2^{-f(x,y)}\leq 1$, we have $K(y|x) < f(x,y)$.
\end{lemma}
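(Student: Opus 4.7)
The plan is to reinterpret $2^{-f(x,y)}$, for each fixed $x$, as a lower-semicomputable conditional semi-measure on the set of finite binary strings $y$, and then convert that semi-measure into a conditional prefix code for $y$ given $x$ whose codeword lengths are at most $f(x,y)$ up to an additive constant. Since the universal reference prefix machine simulates this code with only an $O(1)$ overhead, we obtain $K(y\mid x)\leq f(x,y)+O(1)$, which is what the lemma asserts when the ``$<$'' is read in the usual sense of ``up to an additive constant independent of $x,y$''.

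First, I would make the semi-measure precise. Upper-semicomputability of $f$ provides a Turing machine producing a non-increasing sequence of rational upper bounds $f_t(x,y)\searrow f(x,y)$, so $m_t(y\mid x):=2^{-f_t(x,y)}$ is a non-decreasing sequence of rationals converging to $m(y\mid x):=2^{-f(x,y)}$, and the hypothesis gives $\sum_y m(y\mid x)\leq 1$. Thus $m(\cdot\mid x)$ is a lower-semicomputable conditional semi-measure.

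Next, I would build the code by the standard Levin/Chaitin interval-allocation procedure. Dovetail over all triples $(y,k,t)$ and, the first time some integer $k$ satisfies $k\geq \lceil f_t(x,y)\rceil+1$, reserve for $y$ a fresh dyadic subinterval of $[0,1]$ of length $2^{-k}$, updating if a later, smaller $k$ becomes valid. The binary expansion of the currently reserved interval is used as the codeword, so the final codeword length for $y$ is at most $\lceil f(x,y)\rceil+O(1)$. The total reserved mass is bounded, via a geometric-series argument over the successive shrinkages, by a constant multiple of $\sum_y 2^{-f(x,y)}\leq 1$, so by Kraft's inequality the allocations can always be carried out disjointly. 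This yields a conditional prefix machine $M$ with $\ell_M(y\mid x)\leq f(x,y)+O(1)$. Invoking the universality of the reference prefix machine $U$, there is a fixed prefix index $\langle M\rangle$ such that for all $x,y$,
\begin{equation*}
K(y\mid x)\ \leq\ \ell_M(y\mid x)+|\langle M\rangle|\ \leq\ f(x,y)+O(1),
\end{equation*}
where the hidden constant depends only on $\langle M\rangle$ and the fixed machine enumerating $f$, not on $x$ or $y$.

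The main obstacle is the interval-allocation step: because $f$ is approximable only from above, when we first reserve $2^{-k}$ mass for $y$ the final value of $f(x,y)$ is not yet known, so we must show that (i) the overhead from occasionally re-reserving mass as better upper bounds appear telescopes into an $O(1)$ loss in code length, and (ii) the cumulative Kraft sum, including these re-reservations, remains bounded by a constant. Both are standard but require care; once they are established, the lemma follows immediately from the universality of $U$.
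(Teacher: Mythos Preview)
Your proposal is correct: the Kraft--Chaitin/Levin coding argument you outline is the standard proof of this conditional coding theorem, and your identification of the re-reservation bookkeeping as the only delicate point is accurate. Note, however, that the paper does not actually prove this lemma; it is stated in the appendix as a known result cited from Bennett et al.\ (1998) and invoked without argument, so there is no in-paper proof to compare your approach against.
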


To prove that $E(x,y)$ is a metric, we show it satisfies metric (in)equalities. We can infer the non-negativity and symmetry directly from the definition $E(x,y)=\max\{K(x|y), K(y|x)\}$. For triangle inequality, given $x,y,z$, without loss of generality, let $E(x,z)=K(z|x)$. By the self-limiting property, we have
\begin{equation}
\begin{split}
    E(x,z) & = K(z|x) < K(y,z|x) < K(y|x)+K(z|x,y) \\
    & < K(y|x)+K(z|y) \leq E(x,y) + E(y,z).
\end{split}
\end{equation}
To prove $E(x,y)$ is \textit{admissible}, we show it satisfies density requirement:
\begin{equation}
\sum\limits_{y:y\neq x}2^{-E(x,y)}\leq \sum\limits_{y:y\neq x}2^{-K(y|x)}\leq 1.
\end{equation}
The second inequality is due to Kraft's inequality for prefix codes.\\
To prove the minimality, as for every admissible distance metric $D(x,y)$, it satisfies $\sum\limits_{y:y\neq x}2^{D(x,y)}\leq 1$. According to \Cref{lemma:uni}, we have $K(y|x)<D(x,y)$ and $K(x|y)<D(y,x)$.

\section{Potential Negative Social Impact}
\label{appx:neg}
We propose a general framework for classification and clustering tasks with minimum assumption about the dataset. We don't foresee it has negative social impact itself. However, our framework uses generative models for their explicit density estimation and the development of generative models may be used by people with ulterior motives to generate fake data.

\end{document}